\newcommand{\ex}[1]{\mathbb{E}\left[ #1 \right]}  
\newcommand{\exs}[1]{\mathbb{E}[ #1 ]}  
\newcommand{\varn}[1]{\mathbb{V}[#1]}  
\newcommand{\IR}{\mathbb{R}}  
\newcommand{\IP}[1]{\mathbb{P}\{#1\}}  
\newcommand{\HH}{\mathcal{H}}  
\newcommand{\indf}[1]{\llbracket #1 \rrbracket}
\newcommand{\eg}{\emph{e.g.}}  
\newcommand{\ie}{\emph{i.e.}}  
\newcommand{\obs}{y}
\newcommand{\ARIMA}{\text{ARIMA}}
\newcommand{\LSTM}{\text{LSTM}}
\newcommand{\EWMA}{\text{EWMA}}
\newtheorem{defn}{Definition}
\newtheorem{proposition}{Proposition}
\newtheorem{corollary}{Corollary}
\newtheorem{remark}{Remark}
\newcommand{\cCro}{\cite{croston_forecasting_1972}}
\newcommand{\cSny}{\cite{snyder_forecasting_2012}}
\newcommand{\cShen}{\cite{shenstone_stochastic_2005}}
\newcommand{\cSBA}{\cite{syntetos_bias_2001}}
\newcommand{\cTSB}{\cite{teunter2011intermittent}}
\newcommand{\tril}[2]{#1 \newline (#2)}
\title{Intermittent Demand Forecasting with Renewal Processes}
\author[1,2]{\small Ali Caner T\"{u}rkmen\thanks{Work done before joining Amazon}\thanks{Corresponding author: {\tt atturkm@amazon.com}}}
\author[1]{\small Tim Januschowski}
\author[3]{\small Yuyang Wang}
\author[2]{\small Ali Taylan Cemgil}
\affil[1]{\footnotesize Amazon Web Services AI Labs, Berlin, Germany}
\affil[2]{\footnotesize Bo\u{g}azi\c{c}i University, Istanbul, Turkey}
\affil[3]{\footnotesize Amazon Web Services AI Labs, East Palo Alto, CA, USA}
\date{}
\begin{document}

\maketitle

\section*{Abstract}
Intermittency is a common and challenging problem in demand forecasting.
We introduce a new, unified framework for building intermittent demand forecasting models, which incorporates and allows to generalize existing methods in several directions.
Our framework is based on extensions of well-established model-based methods to discrete-time renewal processes, which can parsimoniously account for patterns such as aging, clustering and quasi-periodicity in demand arrivals.
The connection to discrete-time renewal processes allows not only for a principled extension of Croston-type models, but also for an natural inclusion of neural network based models---by replacing exponential smoothing with a recurrent neural network.
We also demonstrate that modeling continuous-time demand arrivals, i.e., with a temporal point process, is possible via a trivial extension of our framework.
This leads to more flexible modeling in scenarios where data of individual purchase orders are directly available with granular timestamps. Complementing this theoretical advancement, we demonstrate the efficacy of our framework for forecasting practice via an extensive empirical study on standard intermittent demand data sets, in which we report predictive accuracy in a variety of scenarios that compares favorably to the state of the art.



\section*{Introduction} \label{sec:introduction}


Intermittent demand forecasting (IDF) is concerned with demand data where demand appears sporadically in time \cite{croston_forecasting_1972,williams_stock_1984,eaves_forecasting_2004,boylan_classification_2008}, \ie, long runs of zero demand are observed before periods with nonzero demand.
Not only does this sparsity render most standard forecasting techniques impractical; it leads to challenges on measuring forecast accuracy \cite{hyndman_another_2006}, model selection \cite{kourentzes_intermittent_2014}, and forecast aggregation \cite{petropoulos_forecast_2015}.

Demand for large shares of inventory catalogues in manufacturing are well known to exhibit intermittency ~\cite{seeger_bayesian_2016,bose2017probabilistic}.
Intermittent demand is most likely to appear with slow-moving, high-value items that are critical to production processes.
For example, spare parts in aerospace and defense are well known to exhibit intermittent patterns \cite{syntetos_intermittent_2011,bacchetti2012spare}.
Therefore, precise estimates of forecast uncertainty, \eg, with access to forecast distributions, are vital for IDF.

IDF was recognized as a unique and challenging problem in the early 70s
\cite{croston_forecasting_1972,ward_determining_1978,schultz_forecasting_1987,dunsmuir_control_1989,willemain_forecasting_1994}. 
Upon recognizing that traditional approaches such as simple exponential smoothing (SES) led to poor forecasts in slow-moving inventory, Croston \cite{croston_forecasting_1972} proposed to independently apply exponential smoothing to consecutive positive demand sizes, and to the number of periods between each (\ie, {\em interarrival} or {\em interdemand} times).
Towards developing forecast distributions and uncertainty estimates, statistical models underlying Croston's method were proposed by  \cite{johnston_forecasting_1996, snyder_forecasting_2002,shenstone_stochastic_2005,snyder_forecasting_2012}.

In this paper, we make a sequence of observations on IDF methods proposed so far.
We start working from Croston's original insight, towards a consistent set of flexible intermittent demand models.
We explore extensions of these models in several directions, which are natural consequence of our new framework to systematically tackle IDF.
Our paper builds on early work in \cite{turkmen2019intermittent}, expanding on theoretical concepts and adding a thorough empirical analysis.
We summarize our observations and contributions below.

\begin{itemize}
	\item We draw connections between existing IDF models and renewal processes.
	We note that these two subjects of applied statistics both deal with temporal sparsity, and have been developed for planning spare parts inventories.
	\item We introduce a flexible set of discrete-time renewal process models for stationary intermittent demand. We illustrate that these models are able to capture patterns such as temporal clustering, aging, and quasi-periodicity of demand.
	\item We cast Croston-like models as instances of discrete-time conditional (self-modulating) renewal processes. Through this observation, we can introduce recurrent neural networks to recover more flexible ways in which renewal processes are modulated. 
	\item We observe that our construction applies in contexts in which individual demand events are observed in continuous time. 
	We make connections to temporal point processes and their neural variants, and apply these models in IDF. 
	\item We evaluate our approach on Car Parts, Auto, RAF, and M5 competition data sets, as well as a newly introduced data set, and show its favorable performance compared to the current state of the art.
\end{itemize}

Our paper consists of four sections.
In the first section, we start with a survey of literature on intermittent demand forecasting, with a focus on machine learning-driven methodologies.
In the subsequent section, we introduce preliminary statistical and algorithmic concepts such as renewal processes, temporal point processes, and recurrent neural networks.
In the third section, we introduce our unified framework---which we test on a wide array of synthetic and real data experiments in the last section. 

\section*{Literature Review}\label{sec:lit_review}


The literature on IDF can be traced back to the original paper by Croston \cite{croston_forecasting_1972}, and it has since grown considerably.
In the following, we provide a chronological overview of the literature on IDF, with a focus on works that highlight a model-based approach or employ neural networks. 
 
Croston~\cite{croston_forecasting_1972} observed that more accurate point forecasts of intermittent demand resulted from performing exponential smoothing separately on sequences of demand sizes and inter demand intervals.
Apart from providing a forecast heuristic, he introduced a set of models which, he posited, would lead to the forecast equations he provided. 
Later developed and corrected \cite{rao_comment_1973,syntetos_bias_2001}, Croston's method has been the most common IDF method for stockists and forecasters \cite{hyndman_forecasting:_2018}, and the {\em de facto} standard in forecasting software and libraries.
Among early extensions to Croston's method, Schultz \cite{schultz_forecasting_1987} suggested applying different smoothing constants to interarrival times and demand.
Willemain et al. \cite{willemain_forecasting_1994} verified results by \cite{croston_forecasting_1972}, who also illustrated that interdemand times exhibit autocorrelation by statistical analysis on IDF data. 

Dunsmuir and Snyder \cite{dunsmuir_control_1989} provided an explicit model for inventory control in an intermittent
demand scenario. 
Notably, they took stochastic delivery lead times of replenishment to be gamma random variables, modeling replenishment in continuous time. 
Johnston and Boylan \cite{johnston_forecasting_1996} pointed out that Croston's method did not offer an estimate of variability in demand.
They proposed separate continuous-time models, \ie, temporal point processes, for demand sizes and order arrivals. 
Notably, they assumed the order arrivals to follow a Poisson process, building on Cox's results from renewal theory and developing estimates of demand mean and variance.
They also explored different order size distributions under Poissonian order arrivals.

Syntetos \cite{syntetos_forecasting_2001} gave a survey of IDF in his thesis, along with error approximations in IDF models and some extensions for inventory control. 
Importantly, he pointed out the ``inversion bias'' in Croston's original forecast estimates and provided a revised estimator~\cite{syntetos_bias_2001}.
Syntetos later also gave a simpler approximant apart from reviewing questions around forecast accuracy measures in IDF \cite{syntetos_accuracy_2005}.
A similarly modified IDF estimate was studied by Shale et al. \cite{shale_forecasting_2006}, where the authors assumed underlying Poissonian order arrivals.
A review of these studies was given in \cite{syntetos_intermittent_2011}.

Another method for generating point forecasts was given by Teunter et al. \cite{teunter2011intermittent}. 
The authors applied exponential smoothing directly on the {\em probability} of demand, bypassing the inversion bias in Croston's method. 
A review, as well as a comparative study of point forecast methods on real IDF data was presented by Babai et al. \cite{babai2014intermittent}, with no strong evidence in favor of any of the forecast functions considered.

The problem of associating a stochastic model to Croston's method was explored by Snyder \cite{snyder_forecasting_2002}. 
He proposed several modifications to the method and considered intermittent demand forecasts via the parametric bootstrap on a set of state-space models (SSM).
Notably, this work draws connections between the renewal-type model of Croston \cite[Appx. B]{croston_forecasting_1972} and single source of error SSMs. 

Shenstone and Hyndman \cite{shenstone_stochastic_2005} investigated the validity of Croston's methods on several fronts, pointing out that the i.i.d. assumption on order sizes and interarrival times were inconsistent with the exponentially weighted moving average (EWMA) forecast estimates. 
The authors explored a set of ``modified'' models that would yield Croston's estimates, assuming both sizes and intervals follow an ARIMA(0,1,1) process. 
To ensure nonnegativity, required of both sizes and intervals, they proposed log-Croston models where an ARIMA(0,1,1) process determined the required quantities in the log domain. 
Importantly, the authors argue that there exist no practicable models that yield Croston's estimates as optimal forecasts. 
This is since any local-level model that yields the EWMA as an unbiased one-step-ahead forecast, and is defined on the positive real half-line, suffers from a convergence problem \cite{grunwald_properties_1997,akram2009exponential}, \ie, draws from the model converge to 0 over long terms.
Let us note that the question of a stochastic model for IDF was also raised in \cite{dolgui_bayesian_2005, sandmann_stochastic_2009}.
An extensive discussion on model-based IDF is given in \cite[Ch. 16]{hyndman_forecasting_2008}.

Snyder et al. \cite{snyder_forecasting_2012} proposed an extended set of models, contrasting several ways in which stochastic Croston-like models were considered in the literature. 
They compared Poisson, negative binomial and zero-inflated Poisson distributions for demand sizes. 
For all distributions, they tested ``static'' (constant) parameters as well as undamped (IMA) and damped (AR) dynamic parameterizations.
The proposed class of ``dynamic'' models are simple, and are shown to outperform Croston's method and static parameter models.
In this paper, we connect previous model-based approaches in \cite{shenstone_stochastic_2005} and \cite{snyder_forecasting_2012} to the rich theory behind renewal processes. 
As such, for the first time, we can derive Croston's methods and other proposed models from a principled probabilistic framework.

SSM approaches were considered and compared by Yelland \cite{yelland_bayesian_2009}, who also later proposed a hierarchical Bayesian treatment of the problem \cite{yelland_bayesian_2010}.
Seeger et al. \cite{seeger_bayesian_2016} considered Bayesian SSM with multiple sources of error, focusing on applicability on very large inventory catalogues (see also 
\cite{seeger2017approximate} for more details).
Their method incorporates exogenous features, and presents an efficient approximate inference algorithm enabling large-scale and distributed learning.
Altay et al. \cite{altay_adapting_2008} explored an adaptation of Croston's method with Holt-like trends. 
Seasonality in IDF, with a comparison of Holt-Winters and SARIMA, was considered in \cite{gamberini2010forecasting}.
Recently, a single source of error SSM with multiplicative errors was explored by Svetunkov and Boylan \cite{svetunkov2017multiplicative}.
A novel method for forecast aggregation via temporal hierarchies, THieF, was introduced by Kourentzes and Athanasopoulos \cite{kourentzes2019elucidate}.

Using machine learning methods---mainly, neural networks---in forecasting is both a recently growing area of research and the subject of ongoing debate \cite{chatfield1993neural,crone2011advances}.
In their extensive comparison of machine learning methods to traditional ``statistical'' methods, Makridakis et al. \cite{makridakis2018statistical} find scarce evidence in favor of using neural networks or other ML-based methods in forecasting.
The authors argue that most empirical evidence in forecasting favors ``model-driven'' methods over ``data-driven,'' in the terms of Januschowski et al. \cite{januschowski2019criteria}.
Moreover, data-driven methods are often harder to train and replicate; and require substantially greater effort for implementation and computation. The findings from this study are controversial to say 
the least as practical evidence from industrial researcher has consistently pointed to contrary conclusions (see e.g.,~\cite{salinas2017deepar,laptev2017,mqcnn,kasun19}). Supporting this point of view are the results from highly visible M4 competition \cite{makridakis2018m4} which was won by a hybrid method that combined exponential smoothing, recurrent neural networks, and ensembling \cite{smyl2020hybrid}. 
Other such hybrid approaches have appeared recently, mainly in the machine learning literature \cite{kourentzes2014neural,salinas2017deepar,wang2019deep,rangapuram2018deep,gasthaus2019probabilistic}, along with software libraries for neural network time series forecasting \cite{alexandrov2019gluonts}.
A thorough review on the use of neural networks in forecasting was recently given by Benidis et al. \cite{benidis2020neural}.

Several studies have considered neural networks in the context of IDF. 
Gutierrez et al. \cite{gutierrez_lumpy_2008} experimented with a single hidden layer feedforward network, with only three hidden units, trained to predict the demand on the next time interval, given time since last demand and the last observed demand size. 
They reported favorable results compared to Croston and Syntetos-Boylan methods. 
These results were debated by Kourentzes \cite{kourentzes_intermittent_2013}, who compared several neural network architectures to a suite of naive and Croston-based forecasters and demonstrated low forecast accuracy in terms of mean absolute error (MAE).
However, the author also found favorable performance by neural networks when inventory control metrics were considered directly.
Mukhopadhyay et al. \cite{mukhopadhyay2012accuracy} experimented with several training strategies to find that neural networks outperform traditional methods.
Recurrent neural networks were considered in \cite{pour2008hybrid}, and several extensions were explored in \cite{lolli2017single}.
A deep LSTM, in the context of IDF, appeared recently in \cite{fu2018hybrid}.
Our paper allows combining neural networks with a model-based approach, a first in the literature for intermittent demand forecasting. 
As a consequence, the neural forecasting methods we present here are the first dedicated neural network-based models for IDF which yield probabilistic forecasts.

Finally, we remark that IDF requires special attention when evaluating forecasting accuracy as standard forecast accuracy metrics are well known to fail in this context. 
Hyndman \cite{hyndman_another_2006} proposed MASE, mean absolute error scaled against a naive-one forecast, for the IDF task.
Kim and Kim \cite{kim_new_2016} advanced this notion to a reportedly more robust metric, employing trigonometric functions for comparing forecasts and ground truth.
Although we used both of these metrics in our evaluations, we found both approaches to be potentially misleading, as a ``zero forecast'' often yielded better results in terms of MASE or MAAPE than any other method.
Therefore, in our experiments we report mean absolute percentage error (MAPE), its symmetric version (sMAPE), and root mean squared error (RMSE). 
Moreover, as our methods are primarily geared towards probabilistic forecasting, we use P50 and P90 loss metrics as in \cite{salinas2017deepar}.
Finally, we also report root mean squared scaled error (RMSSE), the main error metric used in the recent M5 competition \cite{m5guide}. 
The question of how to define intermittency, \ie, which time series can be classified as intermittent, was studied in  \cite{williams_stock_1984,eaves_forecasting_2004,syntetos_categorization_2005,kostenko_note_2006,boylan_forecasting_2008}.

\section*{Preliminaries}\label{sec:preliminaries}

\subsection*{Problem Setup}

We consider univariate, nonnegative, integer-valued time series, corresponding to random variables denoted $\{Y_n\}_{n \in \{1, 2, \cdots\}}, Y_n \in \mathbb{N}$.
Here, $n$ indexes uniformly spaced intervals in time, each corresponding to a {\em demand review period}.
Realizations $\{y_n\}$ of $\{Y_n\}$ will typically contain long runs of zeros, \ie, only rarely will $y_n$ be greater than zero.

For a clearer exposition of our models, we will mostly use the {\em size-interval} notation for intermittent demand. 
As many $Y_n$ are zero, it will be useful to define an auxiliary index $i \in \mathbb{N}$, such that $i$ indexes the {\em issue points}, \ie, periods where nonzero demand is observed. 
More formally, we define the one to one map $\sigma(i) = \min \{n \mid \sum_{m=1}^n \indf{Y_m > 0} \ge i\}$, where $\indf{.}$ is the indicator function (Iverson bracket).
We denote {\em interdemand} times---number of periods between issue points---$Q_i$. 
That is, $Q_i = \sigma(i) - \sigma(i-1)$, taking $\sigma(0) = 0$.
We denote demand sizes for issue points $M_i = Y_{\sigma(i)}$.
Random variables $M_i, Q_i$, both defined on positive integers, fully determine $Y_n$ and vice versa.

%

We use corresponding lowercase Latin letters to denote instantiations of random variables denoted in uppercase Latin.
$\ex{\cdot}$ and $\varn{\cdot}$ denote mathematical expectation and variance respectively.
Unless otherwise specified, we keep indices $i, n$ analogous to their function in this introduction, indexing issue periods and review periods respectively.
Range indexes will denote consecutive subsequences of a random process or realization, \eg, $Y_{1:k} = \{Y_1, Y_2, \cdots, Y_k \}$.
We reserve $\mathcal{N}, \mathcal{G}, \mathcal{PO}, \mathcal{NB}, \mathcal{E}$ to denote Gaussian, geometric, Poisson, negative binomial, and exponential distributions respectively.  
Unless otherwise noted, these distributions are parameterized in terms of their mean.
The support of Poisson, geometric, and negative binomial distributions are {\em shifted}, and defined only on positive integers.
This less common form of the negative binomial distribution is detailed in the appendix.

Our main aim here is to characterize, or approximate, forecast distributions. 
Typically, these are conditional distributions of the form
$\IP{Y_{n+1} | Y_{1:n} = y_{1:n}},$ or $\IP{Y_{n+1:n+k} | Y_{1:n} = y_{1:n}}.$
We will let $\hat{Y}_{n}$ refer to an estimator available at time $n$.
Often, it will be an estimator of the one-step-ahead conditional mean $\ex{Y_{n+1} | Y_{1:n}=y_{1:n}}$.

\subsection*{Croston's Method}\label{subsec:croston}

Here, we briefly elaborate on Croston's method and some of its variants, setting up notation for discussions to follow.

In his paper~\cite{croston_forecasting_1972}, Croston highlighted an important drawback of SES in IDF. 
SES forecasts, by placing the highest weight on the most recent observation, would lead to the highest forecasts just after a demand observation, and the lowest just before. 
Heuristically, Croston proposed to separately run SES on interdemand times and positive demand sizes.  
Concretely, he set,
\begin{subequations}\label{eq:croston_fcast}
\begin{align}
\hat{M}_{i+1} &= \alpha M_{i+1} + (1 - \alpha) \hat{M}_{i}, \label{eq:croston_fcast_1}   \\
\hat{Q}_{i+1} &= \alpha Q_{i+1} + (1 - \alpha) \hat{Q}_{i}. 
\end{align}
\end{subequations}
That is, he proposed to use the EWMA of each series as the forecast estimate.
We will denote this recursive computation $\hat{Q}_{i+1} = \EWMA_\alpha (Q_{1:i+1})$.
Moreover, Croston also discussed possible models for intermittent demands, setting
\begin{subequations}\label{eq:croston_72_model}
\begin{align}
    M_i &\sim \mathcal{N}(\mu, \sigma^2) \text{  i.i.d.}, \\
    Q_i &\sim \mathcal{G}(1 / \pi) \text{  i.i.d.}
\end{align}
\end{subequations}
The paper also considers $\{M_i\} \sim \ARIMA(0, 1, 1)$ with Gaussian innovations. 

The discrepancy between the model of (\ref{eq:croston_72_model}) and forecasts (\ref{eq:croston_fcast}) has been a common issue in the IDF literature that followed.
Assuming the model in (\ref{eq:croston_72_model}), unbiased forecasts of $M_{i+1}, Q_{i+1}$ are possible simply by setting them to the averages of previous values. 
Indeed, this is ``implied'' by the i.i.d. assumption.
Instead, however, the forecast heuristic (\ref{eq:croston_fcast}) has been praised for its ability to capture serial correlation and nonstationary behavior in both sizes and intervals. 
This is clearly at odds with the model assumption.


More formally, the EWMA only functions as an {\em asymptotically} unbiased estimator of the parameter $\mu$. 
Moreover, as noted by Syntetos and Boylan in a series of papers \cite{syntetos_forecasting_2001,syntetos_bias_2001,syntetos_accuracy_2005}, the EWMA of previous interdemand times only results in a biased estimate of $\pi$, due to an oversight of the inversion bias $\ex{1 / \hat{Q}_{i}} \neq 1 / \ex{\hat{Q}_{i}}$.
In \cite{syntetos_accuracy_2005}, the authors corrected for this bias via a Taylor series approximation about the mean of $\ex{1 / \hat{Q}}$ and gave an approximation to an asymptotically unbiased forecast,
\[
\hat{Y}_n = \left(1 - \alpha/2\right) \frac{\hat{M}_{i'}}{\hat{Q}_{i'}}.
\]
Several other variants of forecasts (\ref{eq:croston_fcast}) have been explored, \eg, using a simple moving average instead of the EWMA \cite{shale_forecasting_2006, boylan_forecasting_2008}.

Both models suggested by Croston allow trajectories with negative and non-integer values.
To alleviate this misspecification issue, models with non-Gaussian likelihood have been proposed, \eg, by parameterizing a distribution of positive support with a mean process obeying an IMA model.
Nevertheless, such models---namely, \emph{nonnegative EWMA} models---are ``ill-fated'' \cite{grunwald_properties_1997} since their trajectories converge to 0 over longer terms (see the appendix).
Shenstone and Hyndman \cite{shenstone_stochastic_2005} use this result to point out that no models exist that yield Croston's forecast estimates as consistent and unbiased estimators while being immune to the convergence problem outlined here. 

Exploring possible ways to account for the model--forecast discrepancy in Croston's method,  \cite{snyder_forecasting_2002,shenstone_stochastic_2005,hyndman_forecasting_2008,snyder_forecasting_2012} studied numerous alternative probabilistic models. 
In this paper, we will consider four variations of these models as baseline methods, which we summarize in Table~\ref{tab:baselines}.
Let us note that, to the best of our knowledge, only the first three models were previously proposed \cite{hyndman_forecasting_2008,snyder_forecasting_2012}.

\begin{table}[!ht]
	\caption{Baseline models}
	\label{tab:baselines}
	\centering
	\begin{tabular}{p{4cm} p{4cm} p{4cm}}
		\toprule
		{\bf Model} 	 & $Q_i$ & $M_i$ \\ \midrule
		Static G-Po \cite{croston_forecasting_1972} 	& $\mathcal{G}(\mu_q)$, i.i.d. & $\mathcal{PO}(\mu_m)$, i.i.d. \\[10pt]
		Static G-NB \cite{snyder_forecasting_2012}    & $\mathcal{G}(\mu_q)$, i.i.d. & $\mathcal{NB}(\mu_m, \nu_m)$, i.i.d.\\[10pt]
		\midrule
		EWMA~G-Po \cite{shenstone_stochastic_2005} & {$\!
			\begin{aligned}
			&\mathcal{G}(\hat{Q}_{i-1})\\
			&\hat{Q}_{i-1} = \EWMA_\alpha(Q_{1:i-1})
			\end{aligned}$
		} &
		{$\!
			\begin{aligned}
			&\mathcal{PO}(\hat{M}_{i-1})\\
			&\hat{M}_{i-1} = \EWMA_\alpha(M_{1:i-1})
			\end{aligned}$
		}  \\[25pt]
		EWMA G-NB 	 & {$\!
			\begin{aligned}
			&\mathcal{G}(\hat{Q}_{i-1})\\
			&\hat{Q}_{i-1} = \EWMA_\alpha(Q_{1:i-1})
			\end{aligned}$
		} &
		{$\!
			\begin{aligned}
			&\mathcal{NB}(\hat{M}_{i-1}, \nu_m)\\
			&\hat{M}_{i-1} = \EWMA_\alpha(M_{1:i-1})
			\end{aligned}$
		}  \\[25pt]  \midrule
	\end{tabular}
\end{table}


%

\subsection*{Renewal Processes in Discrete Time}
\label{subsec:prelim_renewal}

Renewal processes constitute a central theme in the theory of stochastic processes and their study plays a more general role in probability theory \cite{cox_renewal_1962,cinlar2013introduction}.
Broadly, renewal processes are concerned with recurring events in repeated trials, where after each such event, trials start identically from scratch. 
That is, {\em interarrival} times between events are independent and identically distributed.
For example, in independent spins of a roulette wheel, drawing a certain number is a recurrent event as is observing three consecutive reds. 

The example of a roulette wheel may appear unusual, as spins constitute a discrete sequence of outcomes. 
Renewal processes are mostly introduced as {\em continuous-time} counting processes (temporal point processes), where inter-arrival times are i.i.d. \cite{cox_renewal_1962}. 
Yet, some of the earlier treatments of renewal theory consider the case of recurrent events in discrete time \cite[Ch. 13]{feller_introduction_1957}. 
We introduce such processes, focusing our attention to the special case where these recurrent events are simple binary outcomes.
\begin{defn}\label{def:dtrp} (Discrete-time renewal process with Bernoulli trials)
Let $\{Z_n\}_n$ define a sequence of (not necessarily independent) binary random variables.
Let $i$ index time steps where $Z_n = 1$, as above, defined through the map $\sigma(i) = \min\{n | \sum_{m=1}^n Z_m \ge i\}$, $\sigma(0) = 0$, and $Q_i = \sigma(i) - \sigma(i-1)$.
The sequence $\{Z_n\}$ defines a \emph{discrete-time renewal process (DTRP)} if $Q_i$ are i.i.d.
\end{defn}
Our reuse of notation is not coincidental. 
It should be clear that, 
\begin{remark} \label{rem:croston_is_renewal}
The demand arrival process $\{Y_n > 0\}$ of {\em Static G-Po}, and {\em Static G-NB} models, as well as Croston's original models, are DTRPs.
\end{remark}
In fact, these models rely on the most basic renewal process---the Bernoulli process---as a demand arrival regime. 

Renewal processes, as the name implies, were developed in the context of modeling {\em failures} in large systems---towards determining when parts of such systems would have to be {\em renewed}.
The underlying intuition is that every time an identical part is {\em renewed}, the random process with which it fails starts anew.

Thinking in terms of renewal processes paves the way to introducing a class of interpretable, parsimonious and flexible models.
It also gives access to tools from renewal theory, such as characterizing multi-step ahead forecast distributions with convolutions of interdemand times, or model fitting via moment-matching methods.
As we will see, they will also enable an intuitive connection to temporal point processes, introduced below.

\subsection*{Recurrent Neural Networks}

Recurrent neural networks (RNN) rely on a recurring {\em hidden state} to model sequential data, and have been widely adopted in time series applications. 
In contrast to feedforward neural networks, which can be seen as general function approximators, RNNs approximate sequences that (partly) depend on a recurrence relation.
See \cite[Ch. 10]{goodfellow2016deep} for an introduction to RNNs.
Long short term memory (LSTM) \cite{hochreiter1997long} networks, which we use in this paper, define a specific RNN architecture designed to capture long-term interactions.

Concretely, given inputs $\{\mathbf{x}_i\}$, LSTMs---as in all RNNs---obey the recurrence
\[
\mathbf{h}_i = \LSTM (\mathbf{h}_{i-1}, \mathbf{x}_i)
\]
where $\mathbf{h}_i \in \IR^d$ comprises both the memory (state) and the output of an LSTM cell.
Note that in time series applications, inputs often include past time series values.
For example, if $y_i$ denotes the time series of interest, $\mathbf{x}_i = [y_{i-1}, y_{i-2}, \cdots]$.
The exact functional form of $\LSTM(.)$ is composed of several nonlinear projections and compositions of $\mathbf{h}_{i-1}$ and $\mathbf{x}_i$, that provide mechanisms for representing ``long-term memory.''
The LSTM network is implemented on virtually all deep learning frameworks such as \cite{chen2015mxnet}.
For further details on the exact computation and optimization of LSTMs, we refer the reader to the tutorial by Olah \cite{olah2015understanding}.

Most previous works in forecasting rely on RNNs to approximate a time series in squared or absolute error.
More precisely, the training objective is
\begin{align*}
\min_\Theta\, &||\mathbf{y} - \mathbf{\hat{y}}||_2 \\
& \text{s.t.  } \hat{y}_i = g(\mathbf{h}_i) \qquad 
\mathbf{h}_i = \LSTM_\Theta(\mathbf{h}_{i-1}, \mathbf{x}_i),
\end{align*}
where $\Theta$ denotes the set of all parameters of function $\LSTM$, and $g(.)$ is a suitable projection to the domain of $y_i$, such as an affine transformation from $\IR^d$ to $\IR$.
This is the case with previous work on IDF \cite{pour2008hybrid,lolli2017single}.
A more recent approach in neural forecasting considers RNNs not as forecast functions, but to parameterize forecast {\em distributions} \cite{salinas2017deepar}, or approximate transition and emission dynamics in an SSM \cite{rangapuram2018deep}.
In this case of {\em probabilistic} RNNs \cite{januschowski2020criteria}, one assumes
\begin{align*}
y_i \sim \mathcal{D}(\beta)  \qquad \beta = g_\beta(\mathbf{h}_i),
\end{align*}
where $y_i$ is {\em drawn} from a suitable probability distribution denoted by $\mathcal{D}$ with parameters $\beta$. 
The parameters are then computed from the output of the LSTM, via an appropriate projection to their domains, denoted here by $g_\beta$.
Then, instead of the mean squared error objective, one minimizes the negative log likelihood of the model.
In addition to naturally reflecting domain assumptions about $y_i$, such as its support or tailedness, this approach naturally leads to forecast distributions.
In the sequel, when we use neural networks in the IDF context, we will rely on this approach. 
Namely, we will let neural networks parameterize conditional interarrival time distributions in self-modulating renewal processes.

\subsection*{Temporal Point Processes} \label{subsec:prelim_tpp}

Temporal point processes (TPP) are probabilistic models for sets of points on the real line.
Often, these points represent as discrete (instantaneous) {\em events} in continuous time.
Examples of discrete event sets are plenty, such as order arrivals in a financial market, anomalous log entries in a computer system, or earthquakes.

Demand arrivals in a stock control system can be seen as discrete events, and modeled via a TPP.
The rationale for this approach is more intuitive for intermittent demands where instances are rare and the main goal is to determine {\em when} they will occur. 
Croston \cite{croston_forecasting_1972} observed in his original article that intermittent demand data arose since ``updating [occurred] at fixed unit time intervals, which are much shorter than the times between successive demands for the product.''
The same approach was later used as a basis in deriving forecast approximations, for example,
\cite{johnston_forecasting_1996} based their analysis on an underlying Poisson process.

As noted above, DTRPs have better known, continuous time counterparts which constitute a large subclass of TPPs \cite[Ch. 4]{daleyintroduction}.
As such, discrete-time IDF models we introduced in Section~\ref{subsec:prelim_renewal} have natural continuous time counterparts, including those that benefit from recurrent neural networks for modulating interdemand time distributions \cite{du_recurrent_2016,mei_neural_2017,turkmen2019fastpoint,shchur2019intensity}.

Then again, discrete-time processes with binary trajectories, such as DTRPs as given in Definition~\ref{def:dtrp}, have also been referred to as ``point'' processes.
Determinantal point processes constitute an important example, which define random processes for finite-dimensional binary vectors \cite{borodindeterminantal,kulesza2012determinantal}.
The ``points'' in this case are the vector elements which equal to 1.
Although point processes customarily refer to random point collections on a Borel set, this flexible use of nomenclature further justifies the connection between DTRPs and continuous-time models.

Modeling demand directly with a point process is mostly impractical as exact timestamps of demand events are not observed. 
Moreover, estimation and inference algorithms are computationally intensive: computation times scale asymptotically in the number of events and not the number of demand review periods.
However, in increasingly many scenarios such as in e-commerce, the exact demand time is known and can be used directly. 
In IDF, the number of events is often on the same order as the number of intervals, making computational costs comparable.
In this light, we will explore using TPPs on intermittent demand data, and study how directly addressing demand timestamps affects forecast accuracy.

\section*{Models} \label{sec:models}
 
\subsection*{Discrete-Time Renewal Processes} \label{subsec:models_dtrp}

In previous sections, we discussed how existing IDF models could be cast as renewal processes.
We will first follow the obvious extension implied by this generalization, and we will introduce a set of models with more flexible interdemand time distributions.
Particularly, we will first consider extending the demand arrival process of {\em Static} models given above. 
In this section, we focus on the rationale for doing so.

Previous IDF models have considered the geometric distribution as a model for the times between successive issue points.
Doing so is well-justified, it assumes a homogeneous Bernoulli process for demand arrivals, with the probability of observing a demand constant across time and independent of all other intervals.
Yet, this is an obvious oversimplification of real-world arrival processes. 

A natural generalization of {\em Static} models is to alter the demand interarrival time distribution.  
This relaxes the strict assumption imposed by homogeneous Bernoulli arrivals.
To frame our analysis, we define the {\em hazard rate}\footnotemark\, $h(k)$ for a discrete random variable $Q$ with c.d.f. $F$ as
\[
h(k) = \IP{Q = k | Q \ge k} = \IP{Q = k}/\IP{Q \ge k} = \dfrac{\IP{Q = k}}{1 - F(k-1)}.
\] 
Letting $Q$ denote interarrival times, the interpretation of the hazard rate is intuitive. 
It stands for the probability of a new demand occurring at time $k$, given that it hasn't occured for $k-1$ intervals since the previous issue point. 

\footnotetext{Our terminology follows the {\em hazard function} in continuous-time renewal processes.}

It is not hard to show that when $Q$ admits a geometric distribution, the hazard rate is constant and consequently {\em independent} of the past. 
This property of the geometric distribution has been referred to as the {\em memoryless} property.

If $h(k)$ is monotonically increasing, the function determines a profile for {\em aging}.
This is often the case in machine parts, where the probability of a part being renewed increases with the time in service. 
$h(k)$ can also be decreasing, or imply {\em negative aging}. 
In such demand data, issue points are typically {\em clustered} in time.
A flexible distribution for $Q$ can also be used to capture quasi-periodic demand, \eg, with a distribution for $Q$ that concentrates around a set period $\mu$. 

We replace the interarrival time distribution in {\em Static} models with the negative binomial distribution.
The negative binomial can flexibly capture aging, clustering, and periodic effects discussed above, as illustrated in Fig~\ref{fig:neg_bin_dists}.
Indeed, the geometric distribution is a special case of the negative binomial, confirming our intuition that it determines, in a sense, the most basic renewal process possible.
We give further details on the negative binomial distribution in the appendix.

\begin{figure}[!h]
	\makebox[\textwidth][c]{\includegraphics[width=1.3\textwidth]{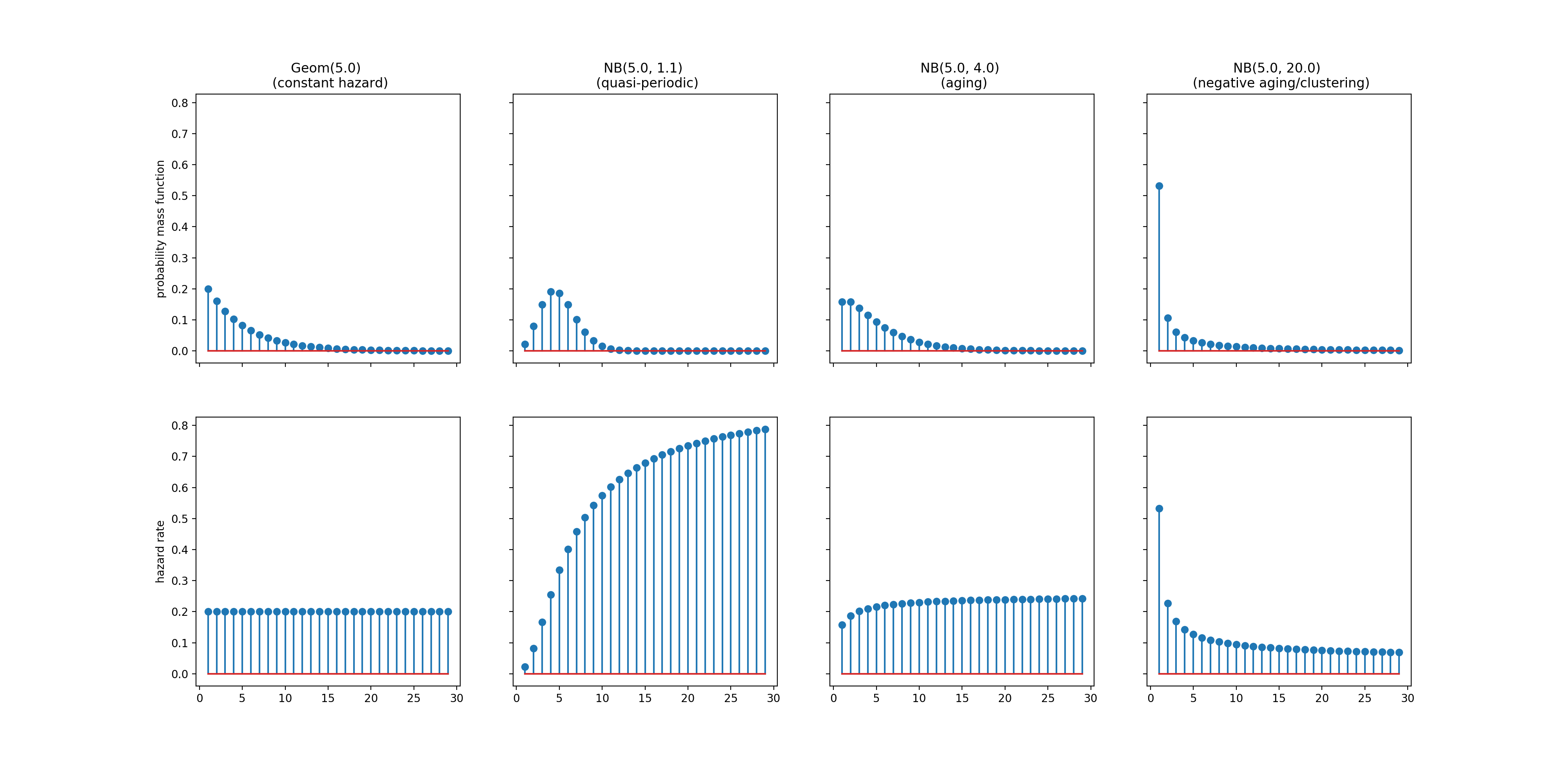}}
	\caption{Comparing probability mass functions and hazard rates of geometric and negative binomial distributions}
	\label{fig:neg_bin_dists}
\end{figure}

Building on this insight, we introduce the {\em Static NB-Po} and {\em Static NB-NB} models, merely by replacing the geometric interarrival time assumption with the negative binomial distribution. 
As in the rest of this work, we will rely on maximum likelihood parameter estimation and the parametric bootstrap (forward sampling) for forecast estimates of these models. 
We should note, however, that {\em Static NB-Po} and {\em Static NB-NB} models yield closed form forecast estimates, which we explore in the appendix.

\subsection*{Self-Modulating Discrete-Time Renewal Processes}

The renewal process construction highlights a potentially unrealistic assumption, that interarrival times are independent.
To the contrary, they have been shown to exhibit autocorrelation \cite{willemain_forecasting_1994}.
Many prior models have either attempted to craft a well-defined probabilistic model for nonstationary and autocorrelated demand arrival regimes, or to encode these relationships via SES-like heuristics \cite{shenstone_stochastic_2005,hyndman_forecasting_2008}.
Two examples, {\em EWMA G-Po} and {\em EWMA G-NB} were given above.

In order to make a similar extension to these models, we first frame them as instances of {\em self-modulating} DTRPs.
Such models, where the interarrival time distribution is determined by a conditional rate function have been widely explored in the TPP literature \cite{daleyintroduction}.
Their key characteristic is that interarrival times now obey a {\em conditional} distribution, the parameters of which are easily computed.
More formally, letting $Q_i$ denote interarrival times as in Definition~\ref{def:dtrp},
\begin{defn}
A sequence of $Q_i$ define a {\em self-modulating DTRP} if the sequence obeys an identically defined conditional distribution
$
\IP{Q_i = k|\HH_i}
$
for all $i, k$, where $\HH_i$ denotes the history\footnotemark\, of events up to interval $i$.
\end{defn}
\footnotetext{More formally, $\HH_i$ denotes a {\em filtration}---a growing sequence of sigma-algebras---to which the $Q_i$ process is {\em adapted}.}

EWMA models are a good example.
For all $i$, the conditional distribution of $Q_i$ is defined as a random variable whose mean is the EWMA of previous realizations.
In other words, the history $\HH_i$ completely determines the distribution of $Q_i$.
A natural next step is to follow the arguments of the previous section, and alter the conditional interarrival time distributions with the negative binomial distribution. 
We will explore {\em EWMA NB-Po} and {\em EWMA NB-NB} models as extensions of their counterparts with geometric interarrival times. 

Note that the {\em convergence problem} that plagues such models is not mitigated by the change in interarrival time distributions. 
For our model specification, the problem takes a slightly different form.
That is, over longer forecast horizons, EWMA model forecasts converge to a forecast of ``all ones,'' introducing a slightly different bias than the one shown in \cite{grunwald_properties_1997}. 
We give further details of this issue in the appendix.
Finally, note that self-modulating DTRP models can be seen as instances of {\em Markov renewal processes} \cite{cinlar_markov_1975}.

\subsection*{RNN-based Discrete-Time Renewal Processes} \label{subsec:dt_deep_renewal}

DTRPs yield an intuitive and simple way to extend IDF models. 
Yet, they suffer from two main limitations that hinder a realistic understanding of intermittent demand series. 
First, the conditional distribution of demand sizes and intervals rely only on the exponentially weighted moving average.
However, the recurrence relation that determines the conditional mean of sizes and intervals may take more complicated functional forms.
The second limitation arises since size and interval processes are assumed independent. 
One could reasonably expect that longer interdemand intervals result from higher demands---\eg, when a customer {\em frontloads} inventory---an effect not captured by this independence assumption.
Finally, other features or covariates that are consequential for accurate forecasts may be available with each issue point, such as dummy variables for discounts or holidays.

To alleviate the first issue, we observe that under mild conditions any recursive form for parameterizing $\hat{Q}_i$ would yield a well-defined self-modulating DTRP.
An RNN is no exception. 
Recall from the {\em EWMA} model definition that 
\[
\hat{Q}_i = \EWMA_\alpha(Q_{1:i}) = f(\hat{Q}_{i-1}, Q_i) = \alpha Q_i + (1 - \alpha) \hat{Q}_{i-1}.
\] 
An LSTM, admits a similar recursion, where instead of $\hat{Q}_i$, we introduce the {\em hidden state} $\mathbf{h}_i \in \IR^H$. 
We define {\em RNN-based} models via the following alternative form,
\begin{align}
\hat{Q}_i &= g_\mathbf{w}(\mathbf{h}_i), \\
\mathbf{h}_i &= \LSTM_\Theta (\mathbf{h}_{i-1}, Q_i),
\end{align}
where $g_\mathbf{w}(\mathbf{h}) = 1 + \log(1 + \exp(\mathbf{w}^\top \mathbf{h} + w_0))$ is a projection from the hidden state to the domain of $\hat{Q}_i$, parameterized by $\mathbf{w} \in \IR^H, w_0 \in \IR$.
By $\LSTM_\Theta (.)$, as given above, we refer to the long short-term memory recurrent neural network function parameterized by weights collected in $\Theta$.

Secondly, LSTM inputs can be extended to include both the previous interdemand time, and the previous demand size, at no significant added cost for estimation or inference. 
Indeed, this is the approach taken by previous IDF models with neural networks \cite{gutierrez_lumpy_2008,kourentzes_intermittent_2013}.
Finally, the LSTM inputs may include any additional covariates that are available, in contrast to other IDF methods that propose no clear way to accommodate them.

Though conceptually similar, EWMA and LSTM recurrences have a fundamental difference. 
The LSTM network has a potentially high-dimensional parameter vector $\Theta$ that has to be fitted. 
Moreover, neural networks are known to be ``data-hungry,'' requiring a large data set to fit as opposed to the parsimonious representation of temporal continuity embedded in exponential smoothing models.
In order to alleviate this issue, when multiple intermittent demand series are available, we share the parameters $\Theta$ across different samples. 
In machine learning terminology, the LSTM representation learned is a {\em global} model, shared across different items (SKUs). See, \eg, \cite{benidis2020neural,januschowski2018deep,januschowski2019criteria}.

We give a summary of the discrete-time renewal process models introduced so far in Table~\ref{tab:dt_models}.

\begin{table}[!ht]
	\caption{Discrete Time Models}
	\label{tab:dt_models}
	\centering
	\begin{adjustwidth}{0in}{0in}
	\noindent\makebox[\textwidth]{
	\begin{tabular}{p{3cm} p{6cm} p{6cm}}
		\toprule
		{\bf Model} 	 & $Q_i$ & $M_i$ \\ \midrule
		Static NB-Po 	& $\mathcal{NB}(\mu_q, \nu_q)$, i.i.d. & $\mathcal{PO}(\mu_m)$, i.i.d. \\[10pt]
		Static NB-NB    & $\mathcal{NB}(\mu_q, \nu_q)$, i.i.d. & $\mathcal{NB}(\mu_m, \nu_m)$, i.i.d.\\[10pt]
		\midrule
		EWMA NB-Po 	 & {$\!
										\begin{aligned}
										&\mathcal{NB}(\hat{Q}_{i-1}, \nu_q)\\
										&\hat{Q}_{i-1} = \EWMA_\alpha(Q_{1:i-1})
										\end{aligned}$
									} &
								 {$\!
								 	\begin{aligned}
								 	&\mathcal{PO}(\hat{M}_{i-1})\\
								 	&\hat{M}_{i-1} = \EWMA_\alpha(M_{1:i-1})
								 	\end{aligned}$
								 }  \\[25pt]
		EWMA NB-NB 	 & {$\!
									\begin{aligned}
									&\mathcal{NB}(\hat{Q}_{i-1}, \nu_q)\\
									&\hat{Q}_{i-1} = \EWMA_\alpha(Q_{1:i-1})
									\end{aligned}$
								} &
								{$\!
									\begin{aligned}
									&\mathcal{NB}(\hat{M}_{i-1}, \nu_m)\\
									&\hat{M}_{i-1} = \EWMA_\alpha(M_{1:i-1})
									\end{aligned}$
								}  \\[25pt]  \midrule
		RNN NB-Po 	 & {$\!
									\begin{aligned}
									&\mathcal{NB}(\hat{Q}_{i-1}, \nu_q)\\
									&\hat{Q}_{i-1} = g\left(\LSTM_\theta(\mathbf{h}_{i-1}, Q_{i-1}, M_{i-1})\right)
									\end{aligned}$
								} &
								{$\!
									\begin{aligned}
									&\mathcal{PO}(\hat{M}_{i-1})\\
									&\hat{M}_{i-1} = g\left(\LSTM_\theta(\mathbf{h}_{i-1}, Q_{i-1}, M_{i-1})\right)
									\end{aligned}$
								}  \\[25pt]
		RNN NB-NB 	 & {$\!
										\begin{aligned}
										&\mathcal{NB}(\hat{Q}_{i-1}, \nu_q)\\
										&\hat{Q}_{i-1} = g\left(\LSTM_\theta(\mathbf{h}_{i-1}, Q_{i-1}, M_{i-1})\right)
										\end{aligned}$
									} &
									{$\!
										\begin{aligned}
										&\mathcal{NB}(\hat{M}_{i-1}, \nu_m)\\
										&\hat{M}_{i-1} = g\left(\LSTM_\theta(\mathbf{h}_{i-1}, Q_{i-1}, M_{i-1})\right)
										\end{aligned}$									
									}  \\[25pt]   \bottomrule
	\end{tabular}
	}
	\end{adjustwidth}
\end{table}

\subsection*{Continuous Time Renewal Processes}

For our last set of models, we explore a connection between the models of the previous section 
and temporal point processes.

When granular timestamps of individual demand events are available, temporal point processes can be used directly as a model for intermittent demand.
A flexible class of continuous-time renewal processes result from the use of RNNs similar to their functions above. 
To make the connection, note that continuous-time renewal processes arise as a limit case of their discrete time counterparts.
For instance, it is well known that the Poisson process---the simplest continuous-time renewal process---arises as a limit case of the Bernoulli process. See, \eg, \cite[Ch. 1]{kingman}.
Similarly, the geometric distribution of interarrival times leads to an exponential distribution in the continuous case. 

Let us introduce processes $\{Q'_j\}, \{M'_j\}$ as continuous-time interarrival time and demand size processes. 
The index $j$ now runs over individual demand events, \eg, purchase orders, and not only positive demand intervals. 
We keep the (conditional) distributions of $M'_j$ identical to the discrete-time model as the support of the random variable and its semantics are identical.
To address continuous time, we simply change $Q'_j$ to a random variable with continuous support. 
We list these models in Table~\ref{tab:ct_models}.
Note that the {\em Static E-Po} and {\em Static E-NB} are just homogeneous Poisson processes with positive integer marks.

\begin{table}[!ht]
	\caption{Continuous Time Models}
	\label{tab:ct_models}
	\centering
	\begin{adjustwidth}{0in}{0in}
	\noindent\makebox[\textwidth]{
		\begin{tabular}{p{3cm} p{6cm} p{6cm}}
			\toprule
			{\bf Model} 	 & $Q'_i$ & $M'_i$ \\ \midrule
			Static E-Po 	& $\mathcal{E}(\mu_{q'})$, i.i.d. & $\mathcal{PO}(\mu_{m'})$, i.i.d. \\[10pt]
			Static E-NB 	& $\mathcal{E}(\mu_{q'})$, i.i.d. & $\mathcal{NB}(\mu_{m'}, \nu_{m'})$, i.i.d. \\[10pt]
			\midrule
			RNN E-Po 	 & {$\!
				\begin{aligned}
				&\mathcal{E}(\hat{Q'}_{i-1})\\
				&\hat{Q'}_{i-1} = g\left(\LSTM_\theta(\mathbf{h}_{i-1}, Q'_{i-1}, M'_{i-1})\right)
				\end{aligned}$
			} &
			{$\!
				\begin{aligned}
				&\mathcal{PO}(\hat{M'}_{i-1})\\
				&\hat{M'}_{i-1} = g\left(\LSTM_\theta(\mathbf{h}_{i-1}, Q'_{i-1}, M'_{i-1})\right)
				\end{aligned}$
			}  \\[25pt] 
			RNN E-NB 	 & {$\!
				\begin{aligned}
				&\mathcal{E}(\hat{Q'}_{i-1})\\
				&\hat{Q'}_{i-1} = g\left(\LSTM_\theta(\mathbf{h}_{i-1}, Q'_{i-1}, M'_{i-1})\right)
				\end{aligned}$
			} &
			{$\!
				\begin{aligned}
				&\mathcal{NB}(\hat{M'}_{i-1})\\
				&\hat{M'}_{i-1} = g\left(\LSTM_\theta(\mathbf{h}_{i-1}, Q'_{i-1}, M'_{i-1})\right)
				\end{aligned}$
			}  \\[25pt] \bottomrule
	\end{tabular}
    }
	\end{adjustwidth}
\end{table}

This approach, combining RNNs with TPP has been taken in the machine learning literature to construct flexible models \cite{du_recurrent_2016,mei_neural_2017,turkmen2019fastpoint,shchur2019intensity}, with connections to full Markov renewal processes explored recently by \cite{sharma2018point}. 
\section*{Experiments} \label{sec:experiments}


\subsection*{Synthetic Data}

We first illustrate our approach on a series of synthetic data sets. 

Our work proposes two main ideas.
First, we cast existing IDF models as instances of renewal processes, naturally extending previous point forecast models to probabilistic forecasting.
Therefore, in contrast to point forecast methods such as Croston's estimate \cCro{}  and the Teunter-Syntetos-Babai (TSB) \cTSB{}  method, renewal processes are able to produce consistent probabilistic forecasts.
Moreover, by providing interdemand times with more flexible distributions, we are able to better capture common intermittent demand patterns such as periodicity and aging.

To highlight this, we simulate perfectly periodic hourly intermittent demand data of 10 weeks, fixing the interdemand period at 20 hours, and drawing demand sizes i.i.d. from a Poisson distribution with a mean of 5.
We illustrate forecasts from previous point forecast methods, Croston \cCro{}, Syntetos-Boylan approximation (SBA) \cSBA{}, and TSB \cTSB{} in the top-most panel in Fig~\ref{fig:synth_1}.
The vertical red line marks the beginning of the forecast horizon, and the different colors in the forecast range refer to point forecasts from different methods.
We obtained these forecasts via the \texttt{tsintermittent} library \cite{kourentzes2014tsintermittent}.
The horizontal red line denotes the x-axis.
The next three panels in Fig~\ref{fig:synth_1} display forecasts from the Static G-Po \cCro{} model as well as the newly introduced Static NB-NB and RNN NB-NB models respectively.
Shaded areas represent 1\%--99\% forecast intervals obtained by probabilistic forecasting methods via forward sampling---or, parametric bootstrapping.
The dark blue line in each graph marks the mean of sampled trajectories.
The RNN used is a single-hidden layer LSTM network with only 5 units, and no hyperparameter optimization is performed.

\begin{figure}[!h]
	\makebox[\textwidth][c]{\includegraphics[width=1.3\textwidth]{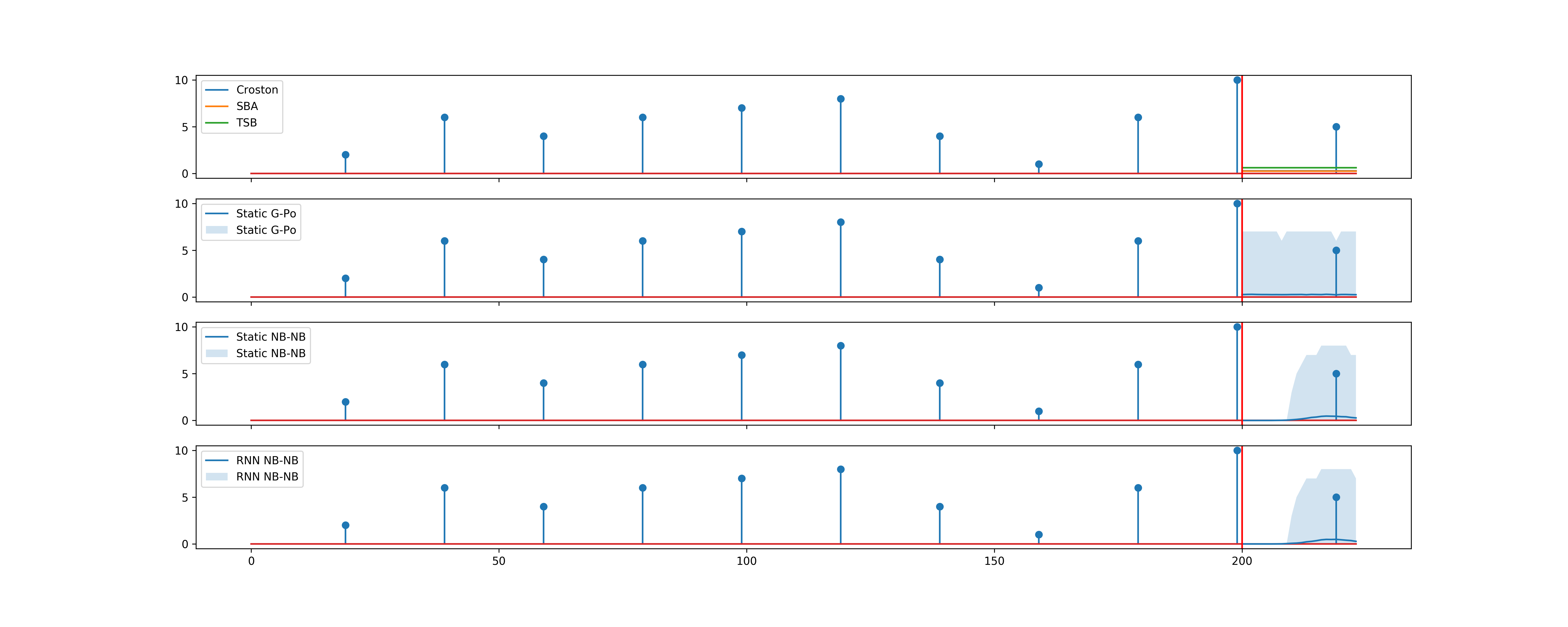}}
	\caption{Comparing forecasts on a perfectly periodic data set. The vertical red line denotes the start of the forecast period.}
	\label{fig:synth_1}
\end{figure}

As expected, Static G-Po \cCro{} produces constant forecasts similar to point forecasting methods.
On the other hand, by using flexible interdemand time distributions, Static NB-NB and RNN NB-NB models are able to capture periodic demand behavior.
Both renewal processes are able to represent periodicity both in the forecast---i.e., the conditional expectation of the time series---and the probabilistic forecast.

Our second contribution is noting that the exponential moving average in Croston-type methods can be replaced with more flexible function families to capture a wider family of patterns.
Above, we posit that neural networks, used as a building block to replace the exponential moving average, can represent most interesting patterns.
To illustrate this, we sample a single time series with ``alternating'' periods, and a constant demand size of 10.
Interdemand times, in turn, alternate between 4 and 16.
In Fig~\ref{fig:synth_2}, we compare point forecasts, Static G-Po, and two new models meant to represent nonstationarities in interdemand times and demand sizes: EWMA NB-NB, and RNN-based NB-NB.

\begin{figure}[!h]
	\makebox[\textwidth][c]{\includegraphics[width=1.3\textwidth]{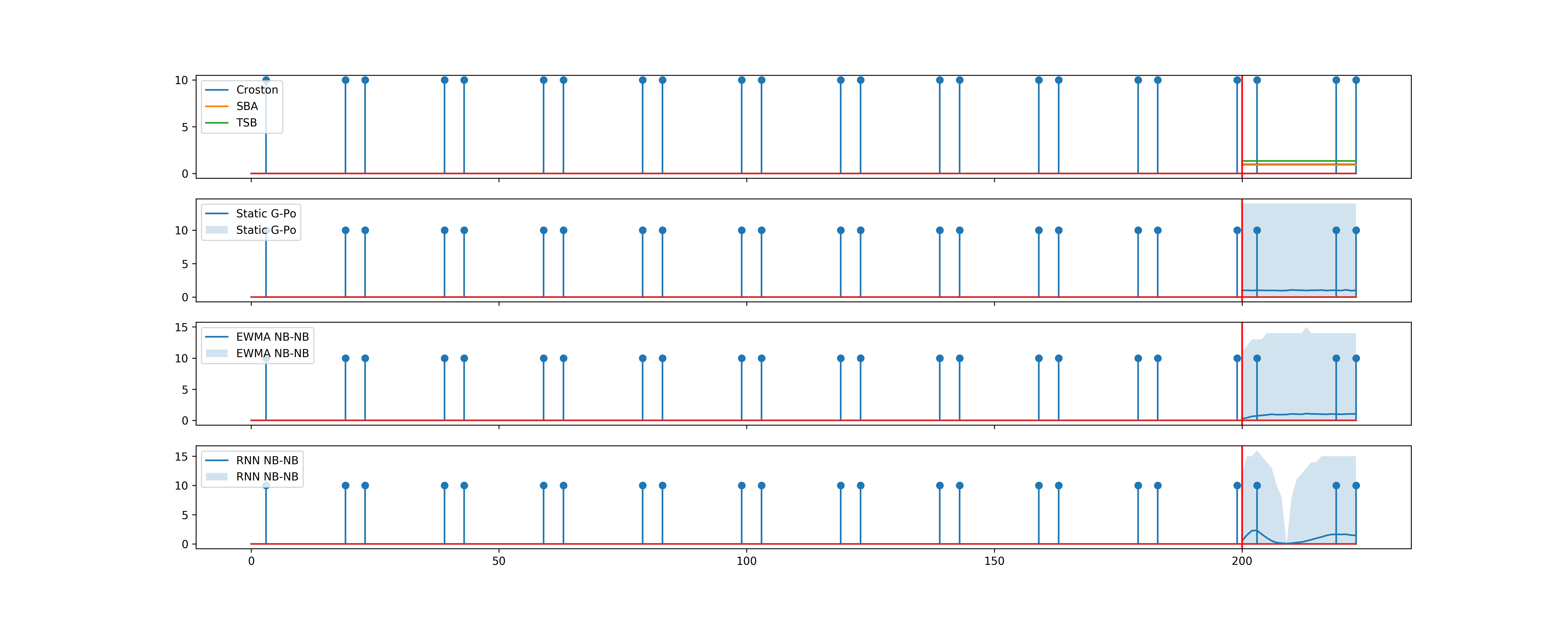}}
	\caption{Comparing forecasts on a data set with alternating periods.}
	\label{fig:synth_2}
\end{figure}

Even in this simple scenario, the forecasts of the RNN-based method clearly outperform those of moving-average based methods, as the former is able to easily learn the alternating pattern.

Finally, we validate this understanding on three synthetic data sets. 
The {\em Periodic} and {\em Alternating} data sets are obtained by repeating the respective simulations above 100 times.
The {\em Random} data set is drawn from the Static G-Po model.
In a sense, this data set represents purely random demand arrivals, with a Bernoulli process governing the arrival process with Poisson demand sizes.

We expect that our models do not yield any significant benefits in the {\em Random} data set.
Instead, we expect that renewal processes outperform baseline methods in the {\em Periodic} data set, and the RNN-based model outperforms others in alternating periods.
We report results of our experiments, where we compare three point forecast baseline methods (Croston, SBA, TSB), with the probabilistic baseline (Static G-Po) and newly introduced models in Table~\ref{tab:results_synth}.
Definitions of accuracy metrics used can be found in the appendix. 

\begin{table}[!ht]
\centering
\caption{Experiments with synthetic data sets}
\label{tab:results_synth}
\begin{tabular}{@{}llccccc@{}}
\toprule
{\bf Data Sets}
& {\bf Model}            & \textbf{P90 Loss} & \textbf{MAPE} & \textbf{sMAPE} & \textbf{RMSE}   & \textbf{RMSSE}   \\ \midrule
\multirow{6}{*}{\textbf{Random}} 
& Croston    \cCro       &  N/A              &  0.875        &  1.966         & 1.535           & 1.484            \\
& SBA    \cSBA           &  N/A              &  0.880        &  1.967         & {\bf 1.534}     & {\bf 1.480}      \\
& TSB    \cTSB           &  N/A              &  0.892        &  1.971         & 1.571           & 1.528            \\
& Static G-Po    \cCro   &  0.792            &  {\bf 0.873 } &  {\bf 1.965}   & 1.537           & 1.488            \\
& Static NB-NB           &  0.796            &  0.876        &  1.966         & 1.536           & 1.493            \\
& RNN NB-NB              &  {\bf 0.752}      &  0.884        &  1.967         & 1.534     & 1.485            \\ \midrule
\multirow{6}{*}{\textbf{Periodic}} 
& Croston    \cCro       &  N/A              &  0.934        &  1.988         & 1.218           & 1.587            \\
& SBA    \cSBA           &  N/A              &  0.936        &  1.988         & 1.218           & 1.586            \\
& TSB    \cTSB           &  N/A              &  0.946        &  1.990         & 1.232           & 1.610            \\
& Static G-Po    \cCro   &  0.460            &  0.933        &  1.987         & 1.219           & 1.589            \\
& Static NB-NB           &  0.419            &  {\bf 0.872}  &  {\bf 1.977}   & {\bf 1.181}     & {\bf 1.536}      \\
& RNN NB-NB              &  {\bf 0.412}      &  0.878        &  1.978         & 1.183           & 1.536      \\ \midrule
\multirow{6}{*}{\textbf{Alternating}} 
& Croston    \cCro       &  N/A              &  0.755        &  1.855         & 2.339           & 1.692            \\
& SBA    \cSBA           &  N/A              &  0.766        &  1.861         & 2.340           & 1.691            \\
& TSB    \cTSB           &  N/A              &  0.768        &  1.862         & 2.361           & 1.709            \\
& Static G-Po    \cCro   &  1.209            &  0.758        &  1.856         & 2.339           & 1.691            \\
& Static NB-NB           &  1.192            &  0.754        &  1.855         & 2.332           & 1.686            \\
& RNN NB-NB              &  {\bf 1.187}      &  {\bf 0.749}  &  {\bf 1.852}   & {\bf 2.331}     & {\bf 1.686}      \\
\bottomrule
\end{tabular}      
\end{table}

Among our results, P90 Loss can be regarded as an accuracy measure of probabilistic forecasts while others refer to point forecasts.
We find that our expectation is reflected in the results, and that the RNN model is able to better represent forecast distributions across the different data sets.

\subsection*{Real Data}

We test our framework on five intermittent demand data sets, comparing extended models to well-known baselines.
These include three well-known standard data sets for IDF and the recently held M5 competition data set which is mostly composed of intermittent series. 
We also introduce a new data set from the UCI Machine Learning repository.\footnotemark{}
We give further details of these data sets below.

\footnotetext{\url{https://archive.ics.uci.edu/ml/index.php}}

\begin{itemize}
\item {\bf Car Parts} consists of monthly parts demand data for a US-based automobile company, previously studied in \cite{hyndman_forecasting_2008,snyder_forecasting_2012}.
It includes 2674 time series.
We conduct experiments on 2503 time series which have complete data for 51 months, with at least one demand point in the first 45 months.

\item The {\bf Auto} data set consists of monthly demand data for 3000 items, for a period of 24 months. 
It was previously used by \cite{kourentzes_intermittent_2014}, and originates from \cite{syntetos_accuracy_2005}.
We report results on 1227 time series that are ``intermittent'' or ``lumpy'' based on the Syntetos-Boylan classification (SBC) \cite{syntetos_accuracy_2005}, using the implementation in \texttt{tsintermittent} \cite{kourentzes2014tsintermittent}.

\item The {\bf RAF} data set consists of aerospace parts demand data taken from the Royal Air Force, used previously by \cite{eaves_forecasting_2004}.
The data set includes 84 months of data for 5000 parts. 

\item The {\bf UCI} data set includes demand records of retail items of 4070 individual items between 01/12/2010 and 09/12/2011 of a UK-based online retailer \cite{chen2012data}.
Most notably, the data set is a transaction record of individual purchases, with exact date and timestamps. 
Among these, we aggregate purchase records of the last three months to daily time series, each corresponding to a single product.
We retain time series with demand in the first 30 days, and with less than 90 purchase records in the entire timespan.
All 1381 such time series are classified as lumpy demand per SBC.

\item The {\bf M5} data set is taken from the recently launched M5 competition \cite{m5guide}.
The data set includes daily unit sales per product and store in Walmart, over the course of over 5 years. 
It contains 30489 time series, with 1913 days of data. 
23401 of these series are classified as intermittent while 5953 are lumpy according to SBC.

\end{itemize}

Summary statistics of data sets are given in Table~\ref{tab:dataset_Stats}, where $\text{CV}^2$ denotes the squared coefficient of variation across all demand instances in the data set.

\begin{table}[!ht]
\centering
\caption{Summary statistics of data sets}
\label{tab:dataset_Stats}
\resizebox{\textwidth}{!}{
\begin{tabular}{lp{2cm}p{2cm}p{2cm}p{2cm}p{2cm}}
\toprule
                                       &{\bf Car Parts} & {\bf Auto}  & {\bf RAF}   & {\bf UCI}   & {\bf M5}    \\ \midrule
Number of Time Series - $M$            & 2503           & 1227        & 5000        & 1381        & 30489       \\
Time Series Length - $N$               & 51             & 24          & 84          & 90          & 1913        \\
Demand Size - Mean                     & 2.02           & 3.79        & 14.19       & 5.21        & 3.54        \\
Demand Size - $\text{CV}^2$            & 0.86           & 4.42        & 11.89       & 3.28        & 3.08        \\
Mean Interdemand Time - $p$            & 3.41           & 1.50        & 8.75        & 4.89        & 3.13        \\
$\text{CV}^2$ of Interdemand Time      & 1.93           & 0.22        & 0.67        & 1.72        & 74.3        \\
Total Number of Issue Points           & 32,093         & 56,133      & 42,695      & 23,416      & 18,549,855  \\
Mean Number of Issue Points            & 12.82          & 18.71       & 8.54        & 16.96       & 608.4       \\
Number of ``Intermittent'' Items (SBC) & 2087           & 942         & 2597        & 0           & 23041       \\
Number of ``Lumpy'' Items (SBC)        & 412            & 285         & 2403        & 1381        & 5953        \\
\bottomrule
\end{tabular}
}
\end{table}

We take the last 6 demand periods of each time series as a held out sample, and train models on the preceding periods.
Results are reported on the forecast accuracy of the last 6 periods, \ie, with a 6-step-ahead forecast.
For the M5 competition, both the held-out sample and the forecast horizon are set to 28 days, in accordance with competition rules \cite{m5guide}.

We implement all models on Apache MXNet \cite{chen2015mxnet}, and rely on the Gluon Adam optimizer for parameter fitting wherever necessary \cite{kingma2014adam}.
We set the learning rate to 0.1. 
We run experiments on the Amazon SageMaker platform, cf. \cite{januschowski2018now,liberty2019}.
For Croston-type models, we set $\alpha=0.1$.

Notably, due to the lack of sufficiently large data for cross-validation, we perform no hyperparameter optimization for the neural network architecture, regularization, or optimization parameters.
For RNN-based models, we use a single hidden layer of 20 units in the LSTM; and a two layer LSTM for the M5 data set.
The LSTM output is mapped to the parameter domains via a softplus function.
For regularization, we rely on {\em weight decay}, setting the decay rate to 0.01.

We evaluate performance of both forecast distributions and point forecasts.
As previously noted, forecast accuracy metrics are especially elusive in the case of IDF.
We report results on a wide variety of metrics which have been used in both IDF and in forecasting in general.
Definitions of these metrics are given in the appendix.
We repeat each experiment three times, and report the average and standard deviation of metrics.
For probabilistic models, we evaluate forecasts based on 250 trajectories sampled from the fitted model.
For point forecast comparisons, we compare true values with means of sampled trajectories.

Discrete-time model results are reported in Tables~\ref{tab:results_dtrp} and \ref{tab:results_dtrp_2}, where we report the means and standard deviations of all accuracy measures. 
\emph{All zeros} refers to accuracy obtained by a point forecast of 0s across the forecast horizon.
Across metrics, lower numbers indicate better predictive accuracy.
Our results vary across model families and data sets, however some key themes emerge.

\begin{sidewaystable}
\centering
\caption{Experiment results for discrete-time models with Car Parts and Auto data sets}
\label{tab:results_dtrp}
\resizebox{\textwidth}{!}{
\begin{tabular}{p{3cm}p{8cm}cccccc}
\toprule
Data Set  & Model         &     P50 Loss            &         P90 Loss        &  MAPE                      &             sMAPE         &   RMSE                        & RMSSE                       \\
\midrule
\multirow{16}{*}{\textbf{Car Parts}}
& All Zeros               &  N/A                    &  N/A                    &  1.000                     &  2.000                    &   1.513                       & 1.406                       \\
& Croston    \cCro        &  N/A                    &  N/A                    &  0.568                     &  {\bf 1.604              }&   1.372                       & 1.307                       \\
& SBA        \cSBA        &  N/A                    &  N/A                    &  0.571                     &  1.607                    &   1.368                       & 1.301                       \\
& TSB        \cTSB        &  N/A                    &  N/A                    &  0.580                     &  1.615                    &   1.362                       & 1.278                       \\
& Static G-Po    \cCro    &  0.750 $\pm$ 0.002      &  0.638 $\pm$ 0.000      &  0.583 $\pm$ 0.000         &  1.608 $\pm$ 0.000        &   1.410 $\pm$ 0.001           & 1.345 $\pm$ 0.000           \\
& Static G-NB    \cSny    &  0.718 $\pm$ 0.001      &  0.713 $\pm$ 0.001      &  {\bf 0.538 $\pm$ 0.001   }&  {\bf 1.583 $\pm$ 0.000  }&   1.434 $\pm$ 0.000           & 1.366 $\pm$ 0.001           \\
& EWMA G-Po      \cShen   &  0.716 $\pm$ 0.002      &  0.791 $\pm$ 0.004      &  0.634 $\pm$ 0.001         &  1.663 $\pm$ 0.000        &   1.438 $\pm$ 0.001           & 1.363 $\pm$ 0.000           \\
& EWMA G-NB               &  0.711 $\pm$ 0.001      &  0.800 $\pm$ 0.000      &  0.635 $\pm$ 0.001         &  1.663 $\pm$ 0.000        &   1.438 $\pm$ 0.000           & 1.363 $\pm$ 0.001           \\
& Static NB-Po            &  0.663 $\pm$ 0.001      &  {\it 0.553 $\pm$ 0.002}&  0.625 $\pm$ 0.001         &  1.633 $\pm$ 0.001        &   {\it 1.343 $\pm$ 0.002     }& 1.257 $\pm$ 0.001           \\
& Static NB-NB            &  0.663 $\pm$ 0.000      &  0.620 $\pm$ 0.003      &  0.580 $\pm$ 0.001         &  1.605 $\pm$ 0.000        &   1.365 $\pm$ 0.001           & 1.290 $\pm$ 0.001           \\
& EWMA NB-Po              &  0.720 $\pm$ 0.001      &  0.751 $\pm$ 0.005      &  0.649 $\pm$ 0.001         &  1.668 $\pm$ 0.000        &   1.435 $\pm$ 0.002           & 1.358 $\pm$ 0.001           \\
& EWMA NB-NB              &  0.708 $\pm$ 0.001      &  0.764 $\pm$ 0.002      &  0.649 $\pm$ 0.001         &  1.668 $\pm$ 0.001        &   1.436 $\pm$ 0.002           & 1.358 $\pm$ 0.001           \\
& RNN G-Po                &  0.396 $\pm$ 0.005      &  {\bf 0.447 $\pm$ 0.005}&  {\bf 0.544 $\pm$ 0.005}   &  1.759 $\pm$ 0.002        &   {\bf 1.062 $\pm$ 0.004}     & {\bf 0.996 $\pm$ 0.006}     \\
& RNN G-NB                &  0.401 $\pm$ 0.006      &  0.452 $\pm$ 0.009      &  0.550 $\pm$ 0.016         &  1.761 $\pm$ 0.007        &   1.074 $\pm$ 0.005           & {\bf 1.015 $\pm$ 0.008}     \\
& RNN NB-Po               &  {\bf 0.388 $\pm$ 0.002}&  0.458 $\pm$ 0.003      &  0.561 $\pm$ 0.006         &  1.763 $\pm$ 0.003        &   {\bf 1.068 $\pm$ 0.001}     & 1.021 $\pm$ 0.008           \\
& RNN NB-NB               &  {\bf 0.385 $\pm$ 0.001}&  {\bf 0.448 $\pm$ 0.004}&  0.565 $\pm$ 0.036         &  1.765 $\pm$ 0.014        &   {\bf 1.068 $\pm$ 0.004}     & 1.026 $\pm$ 0.027           \\
\cline{1-8}
\multirow{16}{*}{\textbf{Auto}}
& All Zeros               &  N/A                    &  N/A                    &  1.000                     &  2.000                    &   9.618                       & 1.820                       \\
& Croston    \cCro        &  N/A                    &  N/A                    &  0.536                     &  1.077                    &   7.881                       & 1.477                       \\
& SBA        \cSBA        &  N/A                    &  N/A                    &  {\bf 0.529               }&  1.081                    &   7.868                       & {\bf 1.468                 }\\
& TSB        \cTSB        &  N/A                    &  N/A                    &  0.552                     &  1.079                    &   8.000                       & 1.501                       \\
& Static G-Po    \cCro    &  2.383 $\pm$ 0.002      &  {\bf 1.527 $\pm$ 0.001}&  0.540 $\pm$ 0.001         &  1.074 $\pm$ 0.000        &   {\bf 7.842 $\pm$ 0.004     }& 1.479 $\pm$ 0.001           \\
& Static G-NB    \cSny    &  {\bf 2.121 $\pm$ 0.000}&  2.277 $\pm$ 0.004      &  0.755 $\pm$ 0.000         &  1.136 $\pm$ 0.000        &   8.439 $\pm$ 0.003           & 1.828 $\pm$ 0.002           \\
& EWMA G-Po      \cShen   &  2.503 $\pm$ 0.008      &  1.574 $\pm$ 0.002      &  0.586 $\pm$ 0.000         &  1.088 $\pm$ 0.000        &   8.144 $\pm$ 0.010           & 1.525 $\pm$ 0.000           \\
& EWMA G-NB               &  2.330 $\pm$ 0.002      &  1.605 $\pm$ 0.001      &  0.586 $\pm$ 0.000         &  1.088 $\pm$ 0.000        &   8.134 $\pm$ 0.005           & 1.526 $\pm$ 0.000           \\
& Static NB-Po            &  2.335 $\pm$ 0.003      &  {\bf 1.521 $\pm$ 0.001}&  {\bf 0.516 $\pm$ 0.000   }&  {\it 1.066 $\pm$ 0.000  }&   {\bf 7.809 $\pm$ 0.023     }& {\bf 1.452 $\pm$ 0.001}     \\
& Static NB-NB            &  {\bf 2.109 $\pm$ 0.002}&  2.271 $\pm$ 0.006      &  0.743 $\pm$ 0.001         &  1.136 $\pm$ 0.000        &   8.465 $\pm$ 0.009           & 1.809 $\pm$ 0.000           \\
& EWMA NB-Po              &  2.493 $\pm$ 0.001      &  1.575 $\pm$ 0.001      &  0.585 $\pm$ 0.000         &  1.088 $\pm$ 0.000        &   8.156 $\pm$ 0.016           & 1.524 $\pm$ 0.000           \\
& EWMA NB-NB              &  2.333 $\pm$ 0.003      &  1.606 $\pm$ 0.001      &  0.586 $\pm$ 0.001         &  1.089 $\pm$ 0.001        &   8.156 $\pm$ 0.008           & 1.526 $\pm$ 0.000           \\
& RNN G-Po                &  2.395 $\pm$ 0.020      &  1.649 $\pm$ 0.010      &  0.548 $\pm$ 0.014         &  {\bf 1.061 $\pm$ 0.001  }&   8.725 $\pm$ 0.016           & 1.492 $\pm$ 0.008           \\
& RNN G-NB                &  2.158 $\pm$ 0.028      &  1.834 $\pm$ 0.014      &  0.674 $\pm$ 0.036         &  1.076 $\pm$ 0.004        &   8.814 $\pm$ 0.053           & 1.630 $\pm$ 0.031           \\
& RNN NB-Po               &  2.411 $\pm$ 0.036      &  1.649 $\pm$ 0.005      &  0.554 $\pm$ 0.020         &  {\bf 1.062 $\pm$ 0.001  }&   8.634 $\pm$ 0.018           & 1.497 $\pm$ 0.016           \\
& RNN NB-NB               &  2.119 $\pm$ 0.009      &  1.892 $\pm$ 0.004      &  0.636 $\pm$ 0.013         &  1.076 $\pm$ 0.006        &   8.951 $\pm$ 0.015           & 1.605 $\pm$ 0.014           \\
\cline{1-8}
\multirow{16}{*}{\textbf{RAF}}
& All Zeros               &  N/A                    &  N/A                    &  1.000                     &  2.000                    &   16.786                      & {\bf 1.423                 }\\
& Croston    \cCro        &  N/A                    &  N/A                    &  0.845                     &  1.949                    &   16.545                      & 1.777                       \\
& SBA        \cSBA        &  N/A                    &  N/A                    &  0.846                     &  1.950                    &   {\bf 16.543                }& 1.770                       \\
& TSB        \cTSB        &  N/A                    &  N/A                    &  0.881                     &  1.963                    &   17.731                      & {\bf 1.710                 }\\
& Static G-Po    \cCro    &  1.312 $\pm$ 0.000      &  2.878 $\pm$ 0.008      &  0.842 $\pm$ 0.004         &  1.948 $\pm$ 0.000        &   16.557 $\pm$ 0.019          & 1.818 $\pm$ 0.000           \\
& Static G-NB    \cSny    &  1.287 $\pm$ 0.000      &  {\bf 2.354 $\pm$ 0.001}&  {\bf 0.726 $\pm$ 0.003   }&  {\bf 1.936 $\pm$ 0.000  }&   {\bf 16.541 $\pm$ 0.018    }& 1.922 $\pm$ 0.001           \\
& EWMA G-Po      \cShen   &  1.339 $\pm$ 0.000      &  3.094 $\pm$ 0.006      &  0.863 $\pm$ 0.001         &  1.945 $\pm$ 0.000        &   16.657 $\pm$ 0.015          & 1.915 $\pm$ 0.000           \\
& EWMA G-NB               &  1.320 $\pm$ 0.000      &  2.782 $\pm$ 0.005      &  0.863 $\pm$ 0.002         &  1.945 $\pm$ 0.000        &   16.663 $\pm$ 0.007          & 1.925 $\pm$ 0.001           \\
& Static NB-Po            &  1.324 $\pm$ 0.000      &  3.299 $\pm$ 0.002      &  0.816 $\pm$ 0.002         &  1.941 $\pm$ 0.000        &   16.607 $\pm$ 0.015          & 1.956 $\pm$ 0.000           \\
& Static NB-NB            &  {\it 1.285 $\pm$ 0.000}&  2.448 $\pm$ 0.000      &  {\bf 0.665 $\pm$ 0.002   }&  {\bf 1.926 $\pm$ 0.000  }&   16.596 $\pm$ 0.021          & 2.099 $\pm$ 0.001           \\
& EWMA NB-Po              &  1.299 $\pm$ 0.004      &  3.118 $\pm$ 0.003      &  0.856 $\pm$ 0.001         &  1.945 $\pm$ 0.000        &   16.666 $\pm$ 0.013          & 1.912 $\pm$ 0.001           \\
& EWMA NB-NB              &  1.291 $\pm$ 0.002      &  2.786 $\pm$ 0.006      &  0.859 $\pm$ 0.002         &  1.946 $\pm$ 0.000        &   16.668 $\pm$ 0.023          & 1.921 $\pm$ 0.001           \\
& RNN G-Po                &  {\bf 1.277 $\pm$ 0.001}&  2.592 $\pm$ 0.021      &  0.798 $\pm$ 0.005         &  1.946 $\pm$ 0.001        &   16.625 $\pm$ 0.007          & 1.825 $\pm$ 0.012           \\
& RNN G-NB                &  {\bf 1.278 $\pm$ 0.001}&  {\bf 2.407 $\pm$ 0.023}&  0.799 $\pm$ 0.002         &  1.949 $\pm$ 0.000        &   16.630 $\pm$ 0.006          & 1.763 $\pm$ 0.002           \\
& RNN NB-Po               &  {\bf 1.278 $\pm$ 0.002}&  2.893 $\pm$ 0.029      &  0.778 $\pm$ 0.005         &  1.938 $\pm$ 0.001        &   16.595 $\pm$ 0.010          & 1.964 $\pm$ 0.018           \\
& RNN NB-NB               &  {\bf 1.278 $\pm$ 0.002}&  2.499 $\pm$ 0.005      &  0.776 $\pm$ 0.004         &  1.943 $\pm$ 0.001        &   16.614 $\pm$ 0.004          & 1.837 $\pm$ 0.006           \\
\bottomrule
\end{tabular}
}
\end{sidewaystable}

\begin{sidewaystable}
\centering
\caption{Experiment results for discrete-time models with UCI and M5 sets}
\label{tab:results_dtrp_2}
\resizebox{\textwidth}{!}{
\begin{tabular}{p{3cm}p{8cm}cccccc}
\toprule
Data Set  & Model         &     P50 Loss            &         P90 Loss        &  MAPE                      &             sMAPE         &   RMSE                        & RMSSE                       \\
\midrule
\multirow{16}{*}{\textbf{UCI}}
& All Zeros               &  N/A                    &  N/A                    &  1.000                     &  2.000                    &   7.661                       & {\bf 3.413                 }\\
& Croston    \cCro        &  N/A                    &  N/A                    &  0.714                     &  1.735                    &   7.456                       & 3.466                       \\
& SBA        \cSBA        &  N/A                    &  N/A                    &  0.709                     &  1.737                    &   7.460                       & 3.462                       \\
& TSB        \cTSB        &  N/A                    &  N/A                    &  0.733                     &  1.730                    &   {\it 7.430                 }& {\bf 3.409                 }\\
& Static G-Po    \cCro    &  2.059 $\pm$ 0.000      &  3.043 $\pm$ 0.006      &  0.714 $\pm$ 0.002         &  1.739 $\pm$ 0.000        &   7.488 $\pm$ 0.001           & 3.514 $\pm$ 0.002           \\
& Static G-NB    \cSny    &  {\it 2.054 $\pm$ 0.000}&  {\it 3.041 $\pm$ 0.003}&  0.655 $\pm$ 0.002         &  {\it 1.729 $\pm$ 0.001  }&   7.458 $\pm$ 0.001           & 3.467 $\pm$ 0.002           \\
& EWMA G-Po      \cShen   &  {\it 2.054 $\pm$ 0.001}&  3.079 $\pm$ 0.005      &  0.719 $\pm$ 0.001         &  1.732 $\pm$ 0.000        &   7.490 $\pm$ 0.001           & 3.548 $\pm$ 0.002           \\
& EWMA G-NB               &  {\bf 2.043 $\pm$ 0.001}&  3.126 $\pm$ 0.004      &  0.726 $\pm$ 0.001         &  1.735 $\pm$ 0.000        &   7.494 $\pm$ 0.001           & 3.555 $\pm$ 0.000           \\
& Static NB-Po            &  2.059 $\pm$ 0.000      &  3.121 $\pm$ 0.004      &  0.698 $\pm$ 0.001         &  1.757 $\pm$ 0.000        &   7.526 $\pm$ 0.002           & 3.585 $\pm$ 0.001           \\
& Static NB-NB            &  2.059 $\pm$ 0.000      &  3.249 $\pm$ 0.001      &  {\it 0.667 $\pm$ 0.001   }&  1.749 $\pm$ 0.000        &   7.504 $\pm$ 0.001           & 3.596 $\pm$ 0.003           \\
& EWMA NB-Po              &  2.120 $\pm$ 0.002      &  3.068 $\pm$ 0.004      &  0.731 $\pm$ 0.001         &  {\it 1.729 $\pm$ 0.000  }&   7.501 $\pm$ 0.002           & 3.568 $\pm$ 0.001           \\
& EWMA NB-NB              &  2.061 $\pm$ 0.000      &  3.113 $\pm$ 0.006      &  0.736 $\pm$ 0.000         &  1.731 $\pm$ 0.000        &   7.503 $\pm$ 0.001           & 3.574 $\pm$ 0.001           \\
& RNN G-Po                &  2.071 $\pm$ 0.007      &  {\bf 2.906 $\pm$ 0.004}&  0.676 $\pm$ 0.006         &  {\bf 1.699 $\pm$ 0.002  }&   {\bf 7.393 $\pm$ 0.004     }& 3.562 $\pm$ 0.010           \\
& RNN G-NB                &  {\bf 2.039 $\pm$ 0.001}&  {\bf 2.870 $\pm$ 0.004}&  0.669 $\pm$ 0.011         &  {\bf 1.686 $\pm$ 0.003  }&   {\bf 7.368 $\pm$ 0.004     }& 3.566 $\pm$ 0.012           \\
& RNN NB-Po               &  2.086 $\pm$ 0.005      &  2.937 $\pm$ 0.005      &  {\bf 0.637 $\pm$ 0.004   }&  1.702 $\pm$ 0.002        &   7.423 $\pm$ 0.004           & 3.573 $\pm$ 0.001           \\
& RNN NB-NB               &  2.062 $\pm$ 0.003      &  2.923 $\pm$ 0.004      &  {\bf 0.618 $\pm$ 0.005   }&  {\bf 1.699 $\pm$ 0.003  }&   7.417 $\pm$ 0.004           & 3.539 $\pm$ 0.018           \\
\cline{1-8}
\multirow{16}{*}{\textbf{M5}}
& All Zeros               &  N/A                    &  N/A                    &  1.000                &  2.000                &  3.852                  & 1.730                       \\
& Croston    \cCro        &  N/A                    &  N/A                    &  0.596                &  1.429                &  2.299                  & 1.370                       \\
& SBA        \cSBA        &  N/A                    &  N/A                    &  0.590                &  1.436                &  {\bf 2.280}              & 1.364                       \\
& TSB        \cTSB        &  N/A                    &  N/A                    &  0.598                &  1.430                &  {\bf 2.213}              & {\bf 1.323}                   \\
& Static G-Po    \cCro    &  1.093 $\pm$ 0.000      &  0.730 $\pm$ 0.000      &  0.616 $\pm$ 0.000      &  1.471 $\pm$ 0.000      &  2.453 $\pm$ 0.000        & 1.414 $\pm$ 0.000             \\
& Static G-NB    \cSny    &  1.250 $\pm$ 0.000      &  1.192 $\pm$ 0.001      &  {\bf 0.532 $\pm$ 0.000}&  1.458 $\pm$ 0.000      &  3.255 $\pm$ 0.001        & 1.520 $\pm$ 0.000             \\
& EWMA G-Po      \cShen   &  {\bf 1.031 $\pm$ 0.000}&  0.667 $\pm$ 0.000      &  0.593 $\pm$ 0.000      &  1.419 $\pm$ 0.000      &  2.301 $\pm$ 0.000        & 1.377 $\pm$ 0.000             \\
& EWMA G-NB               &  1.015 $\pm$ 0.000      &  0.664 $\pm$ 0.000      &  0.593 $\pm$ 0.000      &  1.419 $\pm$ 0.000      &  2.301 $\pm$ 0.000        & 1.378 $\pm$ 0.000             \\
& Static NB-Po            &  1.378 $\pm$ 0.000      &  0.786 $\pm$ 0.000      &  0.615 $\pm$ 0.000      &  1.531 $\pm$ 0.000      &  3.040 $\pm$ 0.002        & 1.529 $\pm$ 0.000             \\
& Static NB-NB            &  1.384 $\pm$ 0.000      &  1.550 $\pm$ 0.001      &  0.569 $\pm$ 0.000      &  1.514 $\pm$ 0.000      &  3.578 $\pm$ 0.001        & 1.728 $\pm$ 0.000             \\
& EWMA NB-Po              &  1.055 $\pm$ 0.000      &  0.663 $\pm$ 0.000      &  0.593 $\pm$ 0.000      &  {\bf 1.402 $\pm$ 0.000}&  2.306 $\pm$ 0.001        & 1.398 $\pm$ 0.000             \\
& EWMA NB-NB              &  1.032 $\pm$ 0.000      &  {\bf 0.661 $\pm$ 0.000}&  0.593 $\pm$ 0.000      &  1.403 $\pm$ 0.000      &  2.308 $\pm$ 0.000        & 1.399 $\pm$ 0.000             \\
& RNN G-Po                &  {\bf 1.028 $\pm$ 0.016}&  {\bf 0.654 $\pm$ 0.006}&  {\bf 0.585 $\pm$ 0.014}&  1.421 $\pm$ 0.008      &  2.387 $\pm$ 0.046        & {\bf 1.360 $\pm$ 0.004}       \\
& RNN G-NB                &  1.111 $\pm$ 0.059      &  0.916 $\pm$ 0.152      &  0.625 $\pm$ 0.051      &  {\bf 1.399 $\pm$ 0.005}&  3.025 $\pm$ 0.234        & 1.512 $\pm$ 0.108             \\
& RNN NB-Po               &  1.041 $\pm$ 0.017      &  0.675 $\pm$ 0.009      &  0.586 $\pm$ 0.004      &  1.444 $\pm$ 0.013      &  2.412 $\pm$ 0.047        & 1.372 $\pm$ 0.005             \\
& RNN NB-NB               &  1.063 $\pm$ 0.035      &  0.721 $\pm$ 0.030      &  0.596 $\pm$ 0.044      &  1.433 $\pm$ 0.011      &  2.644 $\pm$ 0.114        & 1.387 $\pm$ 0.018             \\
\bottomrule
\end{tabular}
}
\end{sidewaystable}

We observe that in four of five data sets, RNN-based models lead to more accurate forecast distributions.  
The exception is the Auto data set, which has a total time series length of 18 steps in sample, barely enough to include any interesting temporal patterns. 
In the RAF data set, while RNN models result in better forecast distributions, as measured by P50 and P90 Loss, they lead to slightly less accurate point forecasts.

In the first three data sets, flexible interdemand times improve both probabilistic and point forecasts.
This confirms our intuition that improvements by more flexible renewal-type models are data set dependent.
The Car Parts data set, which has the lowest variation in demand sizes but a high variation in interdemand intervals yields clear evidence in favor of using DTRPs, and matches our intuition for where such models would be useful.
On the other hand, we find scant evidence that flexible negative binomial demand sizes improve forecast accuracy. 
Again, we observe that it leads to some improvement in the UCI and RAF data sets. 
As both data sets have significantly high variation in demand sizes, this finding correlates with our expectation.

We can also use our results to compare point forecast methods (e.g., Croston, SBA) to their model based counterparts.
We find that similar numbers are obtained, and the result of this comparison depends largely on the accuracy metric used.
For example, MAPE and sMAPE are generally lower for probabilistic forecasts while RMSSE is categorically higher.
We believe this difference is due to the stochastic nature of point forecasts obtained from model-based methods via forward sampling, \ie, that some accuracy measures are more sensitive to the variation introduced by sampling than others.

Different size and interval distributions can lead to variation within model classes, \eg, within RNN-based models. 
Moreover, the direction of this variation is different across data sets.
While this supports our earlier conclusion, that the efficacy of flexible distributions is data set-dependent, we also believe this is partly due to no hyperparameter optimization being performed for RNN-based models.
Indeed, in a real-world scenario, both regularization and training length would be changed for different model configurations.
Here, by keeping them constant to report the ``bare minimum'' of what RNN-based models can achieve in IDF, we inadvertently introduce some noise to the results obtained.

Our results so far lead us to conclude that while our proposed models generally lead to improved forecasts, the exact model configuration depends largely on the data set.
However, Tables~\ref{tab:results_dtrp} and \ref{tab:results_dtrp_2} do not paint a clear picture of which modeling direction is more promising.\footnote{This is in line with other empirical studies on forecasting methods, e.g.,~\cite{alexandrov2019gluonts}, where no overall dominant model is found for forecasting--- unlike other areas of machine learning such as natural language processing, where dominant models have emerged (e.g.,~\cite{devlin2018bert}).}
In order to gauge the improvement brought by our two main modeling ideas, we calculate the significance of improvements brought by model families across different data sets.
For this, we compute the ratio of losses for each model to the loss of the Croston (Static G-Po) model, and report averages of these ratios across data sets.
In Table~\ref{tab:pvals}, we report these average ratios for model families, and the significance level of a one-sample one-sided $t$-test under the null hypothesis that the ratio is equal to or greater than one---\ie, the model brings no improvement.

\begin{table}[!ht]
\centering
\caption{Ratios of model families to the Static G-Po baseline model}
\label{tab:pvals}
\resizebox{\textwidth}{!}{
\begin{tabular}{p{6cm}p{1.5cm}p{1.5cm}p{1.5cm}p{1.5cm}p{1.5cm}p{1.5cm}}
\toprule
{\bf Model Feature}         &     
{\bf P50 Loss}            &         {\bf P90 Loss}        &  {\bf MAPE}             & {\bf sMAPE}              &   {\bf RMSE}                 & {\bf RMSSE}           \\ \midrule
Flexible Demand Size Distribution \newline (*-NB Models) 
& \tril{0.952}{0.000*}  & \tril{1.092}{---} & \tril{1.012}{---}  & \tril{1.006}{---}   & \tril{1.033}{---}      & \tril{1.030}{---}  \\ \midrule
Flexible Interdemand Time Distribution \newline (NB-* Models) 
& \tril{0.963}{0.000*}  & \tril{1.051}{---}  & \tril{1.002}{---}  & \tril{1.007}{---}   & \tril{1.018}{---}      & \tril{1.018}{---}  \\ \midrule
EWMA 
& \tril{0.983}{0.000*}  & \tril{1.043}{---}  & \tril{1.038}{---}  & \tril{1.000}{---}   & \tril{1.001}{---}      & \tril{1.019}{---}  \\ \midrule
RNN
& \tril{0.890}{0.000*}  & \tril{0.948}{0.000*}  & \tril{0.977}{0.001*}  & \tril{1.006}{---}   & \tril{0.987}{0.104}      & \tril{0.966}{0.000*}  \\ \midrule
RNN \& Flexible Demand Size
& \tril{0.882}{0.000*}  & \tril{0.972}{0.105}  & \tril{0.999}{0.472}  & \tril{1.007}{---}   & \tril{1.008}{---}      & \tril{0.976}{0.030*}  \\ \midrule
RNN \& Flexible Interdemand Time
& \tril{0.887}{0.000*}  & \tril{0.947}{0.001*}  & \tril{0.965}{0.000*}  & \tril{1.009}{---}   & \tril{0.980}{0.072}      & \tril{0.968}{0.004*}  \\
\bottomrule
\end{tabular}
}
\end{table}

Here, we find that flexible demand size distribution, proposed in \cite{snyder_forecasting_2012} do not result in improved forecasts across data sets.
Nor do flexible interdemand time distributions, or the renewal process idea, proposed in this paper.
Furthermore, we do not find that exponential moving average-based models result in any significant improvement.
Comparing Table~\ref{tab:pvals} to dataset results supports our claim that while these three model components may improve individual data set performance, they do not do so in general.
However, RNN-based models result in decidedly better forecasts, especially when used in combination with the renewal process idea.
This is surprising, since much less effort was placed in tuning RNN-based models than in a production scenario.
We therefore conclude that, in contrast to prior literature, using RNNs and RNN-modulated renewal processes generally improve intermittent demand forecasts, and should be considered as a promising tool in IDF.

Finally, we test our idea of using TPPs directly for IDF.
The UCI data set is the only one where exact purchase records with timestamps are available.
In Table~\ref{tab:results_ctrp}, we compare Static and RNN-based models with their continuous time counterparts.
Specifically, we train continuous-time models on interdemand times of individual purchase orders instead of the aggregated intermittent demand series.
We sample forward from these models, and aggregate the predicted purchase events in the last 6 time periods.
We compare these with the forecast accuracy of samples taken from discrete-time models.
We can report only a slight improvement of predictive performance, which is not consistent across different metrics.
Although we find that this approach can bring improvements over discrete-time IDF methods, we believe further research is needed before conclusions can be drawn.

\begin{table}[!ht]
\centering
\caption{Experiment results for continuous-time models}
\label{tab:results_ctrp}
\begin{tabular}{@{}lcccccccccc@{}}
\toprule
{\bf Model}            & \textbf{P50 Loss} & \textbf{P90 Loss} & \textbf{MAPE} & \textbf{sMAPE} & \textbf{RMSE} & \textbf{RMSSE} \\  \midrule
Static G-Po    \cCro    &  2.059      &  3.043                 &  0.714       &  1.739      &   7.488       & 3.514       \\
Static G-NB    \cSny    &  2.054      &  3.041                 &  0.655       &  1.729      &   7.458       & {\bf 3.467 }\\
RNN G-Po                &  2.071      &  2.906                 &  0.676       &  1.699      &   7.393       & 3.562       \\
RNN G-NB                &  {\bf 2.039}&  {\bf 2.870}           &  0.669       &  1.686      &   7.368       & 3.566       \\
Static E-Po (cont.)     &  2.062      &  3.039                 &  0.715       &  1.739      &   7.489       & 3.518       \\
Static E-NB (cont.)     &  2.055      &  3.015                 &  0.678       &  1.726      &   7.449       & 3.477       \\
RNN E-Po (cont.)        &  2.055      &  2.937                 &  {\bf 0.652} &  1.708      &   7.416       & 3.544       \\
RNN E-NB (cont.)        &  2.059      &  2.871                 &  0.754       &  {\bf 1.677}&   {\bf 7.359 }& 3.738       \\\bottomrule
\end{tabular}
\end{table}

\section*{Discussion and Conclusion}\label{sec:discussion}

IDF is a uniquely challenging problem. 
The definition of good forecasts is as elusive as the techniques used to produce them.
Most previous works in IDF have focused on point forecast methods with strict assumptions, starting from that of Croston \cCro{}.
When models were proposed, these were rather limited in the statistical patterns that they could accomodate.

Connecting IDF models to discrete-time renewal processes yields a flexible framework for building stochastic models of intermittent time series.
By extending simple, widely used models with flexible interdemand time distributions, we are able to recover various common demand arrival patterns such as aging, periodicity, and clustering.
We believe that, especially in the context of probabilistic forecasting, these models are not only useful tools as presented here, but also open up promising avenues for further research.

RNNs, used as a subcomponent in this framework, lead to substantial increases in forecast accuracy.
We demonstrate, on both synthetic and real data sets, that common patterns in IDF are much better represented by RNNs than existing methods.
Moreover, our results on neural networks can be seen as a proof of concept rather than a full treatment of possible deep neural network architectures for IDF.
This is since no hyperparameter optimization was performed when learning models involving RNNs.
Indeed, different model architectures, learning rates, regularization schemes will contribute further to the favorable results we report here.

Our study also highlights the intimate connection between temporal point processes and intermittent demand models.
Neural temporal point process models, a recent research direction in machine learning, appear as continuous-time instances of our framework.
These models are especially well-suited to scenarios in which purchase order data are directly available, removing the need to decide on how to temporally aggregate demand instances, and directly taking advantage of the full temporal granularity.
While our empirical results on the limited data available do not yield substantive evidence for taking this approach, we strongly believe further effort should be invested in the study of TPPs in forecasting.

\appendix

\section*{Supporting information}

\paragraph{S1 Appendix: Negative Binomial Distribution.}  \label{appx:neg_bin}

We use a slightly altered version of the negative binomial distribution.
Recall that the (generalized) negative binomial random variable is defined with the probability mass function
\[
\IP{X = k} = {k + r - 1 \choose k} (1 - \pi)^{r} \pi^k,
\]
where $k \in \{0, 1, 2, \dots\}, r > 0, \pi \in [0, 1]$. 
We ``shift'' the distribution for consistency with definitions of interarrival times and demand sizes. That is, we define $Y \in \{1, 2, \cdots\}$ such that $Y = X + 1$.
We then have,
\[
\IP{Y = k} = {k + r - 2 \choose k - 1} (1 - \pi)^{r} \pi^{(k-1)}.
\]
We also parameterize the distribution with the ``mean-dispersion'' convention, defining
\[
\mu = \dfrac{\pi r}{1 - \pi} + 1  \qquad \nu = \dfrac{1}{1-\pi},
\]
with $\mu > 1, \nu > 1$. 
Note $\ex{Y} = \mu$ and that $\varn{Y}/(\ex{Y}-1) = \varn{X}/\ex{X} = \nu$ is the ``shifted'' variance-to-mean ratio. 
Concretely, when we define $Y \sim \mathcal{NB}(\mu, \nu)$, we refer to the random variable determined by the probability mass function
\[
p(k) = \IP{Y = k} = {k + \frac{\mu-1}{\nu-1} - 2 \choose k - 1} 
\left(\dfrac{1}{\nu}\right)^{\frac{\mu-1}{\nu-1}} 
\left(1 - \dfrac{1}{\nu}\right)^{(k-1)}.
\]
The distribution function of $Y$ is
\[
F_Y(k) = \IP{Y \le k} = 1 - I_{1 - 1/\nu}\left(k, \dfrac{\mu-1}{\nu-1}\right),
\]
where $I_x(a, b)$ is the regularized {\em incomplete Beta function}.

Finally, let us give an explicit form for the {\em hazard rate} implied by negative binomial random variables,
\begin{align}\label{eq:nb_hazard_rate}
h(k) &= \dfrac{p(k)}{1 - F(k-1)} = \dfrac{F(k) - F(k-1)}{1 - F(k-1)} 
= \dfrac{I_{1 - 1/\nu}(k-1, r) - I_{1 - 1/\nu}(k, r)}{I_{1 - 1/\nu}(k-1, r)} \\
&= 1 - \dfrac{I_{1 - 1/\nu}(k, r)}{I_{1 - 1/\nu}(k-1, r)},    
\end{align}
where we keep $r = \dfrac{\mu - 1}{\nu - 1}$ and take $F(0) = 0$.

\paragraph*{S2 Appendix: Forecast Functions for Static NB Models.}  \label{appx:static_nb_forecasts}

Many of the models we use in this work do not admit closed forms for forecast functions, \ie, efficient closed-form estimators for the conditional mean forecast.
Notable exceptions are Static models with with negative binomial interdemand times. 
Indeed, an analytical expression for $\ex{Y_{n+1} | \HH_n}$ is possible, where use $\HH_n$ to denote the filtration up to demand review period $n$.
Let $Z_n = \indf{Y_n > 0}$ is a binary random variable that is 1 when period $n$ has positive demand.
We denote the number of demand points in $\HH_n$ as $i$, \ie, $i = \sum_{\nu=0}^n Z_\nu$. 
Then,
\[
\ex{Y_{n+1} | \HH_n} = \ex{Z_{n+1} M_{i+1} | \HH_n} = \ex{Z_{n+1} | \HH_n} \ex{M_{i+1} | \HH_n} = \ex{Z_{n+1} | \HH_n} \ex{M_{i+1}}.
\]
Here, the first equality is by definition.
The second follows from our model assumption that demand sizes and intervals are independent. 
Finally, the third equality follows from our assumption that demand sizes are independent of the past.
Moreover,
\[
\ex{Z_{n+1} | \HH_n} = \IP{Z_{n+1} = 1 | \HH_n} = \IP{Z_{n+1} = 1 | T_i},
\]
where we define $T_i = \sum_{j=0}^i Q_j$ as the {\em time} of the previous nonzero demand.
This follows from our renewal process assumption.
That is, given the time of the previous issue point, the time of the next point is conditionally independent of the history.
We can rewrite 
\[
\IP{Z_{n+1} = 1 | T_i} = \IP{Q_{i+1} = n - T_i + 1 | Q_{i+1} > n - T_i} = h_{Q}(n - T_i)
\]
where $h_{Q}$ is the {\em hazard rate} (\ref{eq:nb_hazard_rate}). Finally, we have
\[
\ex{Y_{n+1} | \HH_n} = h_{Q}(n - T_i) \ex{M_{i+1}}.
\]
Conditioning on the EWMA process, similar expressions can be derived easily for EWMA-type self-modulating DTRPs. 
Finally, tools from renewal theory can be used to characterize multi-step forecasts, see \eg, discussions in \cite{feller_introduction_1957}.

\paragraph*{S3 Appendix: Convergence Problem.}  \label{appx:convergence}

In the Models section, we introduced self-modulated DTRP models, and commented that these models suffered from a similar ``convergence'' issue as in previous {\em nonnegative EWMA} models \cite{grunwald_properties_1997,shenstone_stochastic_2005}.

In our construction, we defined the size-interval sequence $\{(M_i, Q_i)\}$ on positive integers $\{1, 2, \cdots\}$.
Therefore, in contrast to the ``convergence to zero'' issue outlined in, \eg, \cite{shenstone_stochastic_2005}, our models are plagued by convergence to dense trajectories of demand sizes 1. 
This can be a slightly more ``desirable'' problem, although we should still caution that these models are not suited for forecasts with long lead times.

More formally, let 
\begin{subequations}\label{eq:appxc_m1}
\begin{align}
M_i - 1 &\sim \mathcal{PO}(\hat{M}_{i-1} - 1), \label{eq:appx_ewma_1}  \\
\hat{M}_{i} &= (1 - \beta) \hat{M}_{i-1} + \beta M_i, \label{eq:appx_ewma_2} 
\end{align}
\end{subequations}
where $0 < \beta \le 1$. 
Below, we give a statement of the ``convergence to one'' issue with an argument that follows \cite{grunwald_properties_1997}, albeit with a slightly more accessible proof. 
\begin{proposition}
Let $M_i$ be defined as in (\ref{eq:appxc_m1}). $M_i \longrightarrow 1$ as $i \uparrow \infty$ almost surely.
\end{proposition}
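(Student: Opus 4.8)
The plan is to reduce to a centered process, invoke the martingale convergence theorem, and then rule out a positive limit by a conditional Borel--Cantelli argument. First I would substitute $N_i = M_i - 1$ and $\hat N_i = \hat M_i - 1$. Under this shift, (\ref{eq:appx_ewma_1}) becomes $N_i \sim \mathcal{PO}(\hat N_{i-1})$ on the unshifted support $\{0,1,2,\dots\}$, while (\ref{eq:appx_ewma_2}) becomes the genuinely linear recursion $\hat N_i = (1-\beta)\hat N_{i-1} + \beta N_i$. The target $M_i \to 1$ is then equivalent to $N_i \to 0$ almost surely.

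Second, let $\mathcal F_i = \sigma(N_1,\dots,N_i)$ and observe that $\ex{N_i \mid \mathcal F_{i-1}} = \hat N_{i-1}$, since the Poisson mean equals $\hat N_{i-1}$. Hence $\ex{\hat N_i \mid \mathcal F_{i-1}} = (1-\beta)\hat N_{i-1} + \beta \hat N_{i-1} = \hat N_{i-1}$, so $\{\hat N_i\}$ is a nonnegative martingale with constant mean $\hat N_0$. By the martingale convergence theorem it converges almost surely to a finite $\hat N_\infty \ge 0$. Convergence forces the increments $\hat N_i - \hat N_{i-1} = \beta(N_i - \hat N_{i-1})$ to $0$, and since $\beta > 0$ this gives $N_i - \hat N_{i-1} \to 0$, i.e. $N_i \to \hat N_\infty$. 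As each $N_i$ is a nonnegative integer, the limit $\hat N_\infty$ must itself be a nonnegative integer, and $N_i$ is eventually constant equal to it.

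The crux---and the step I expect to be the main obstacle---is ruling out $\hat N_\infty = c$ for an integer $c \ge 1$. I would argue by contradiction on the event $\{\hat N_\infty = c\}$. There $\hat N_{i-1} \to c$, so the conditional probability of a zero draw satisfies $\IP{N_i = 0 \mid \mathcal F_{i-1}} = e^{-\hat N_{i-1}} \to e^{-c} > 0$; in particular $\sum_i \IP{N_i = 0 \mid \mathcal F_{i-1}} = \infty$ on this event. Since the events $\{N_i = 0\}$ are adapted, L\'evy's conditional extension of the second Borel--Cantelli lemma then yields $N_i = 0$ infinitely often almost surely on $\{\hat N_\infty = c\}$. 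This contradicts $N_i \to c \ge 1$, so $\IP{\hat N_\infty = c} = 0$ for every integer $c \ge 1$. As $\hat N_\infty$ is almost surely a nonnegative integer, we conclude $\hat N_\infty = 0$ almost surely.

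Finally, combining $N_i \to \hat N_\infty$ with $\hat N_\infty = 0$ gives $N_i \to 0$, and undoing the shift yields $M_i \to 1$ almost surely. The delicate points to monitor are that the integrality of the limit and the eventual-constancy of $N_i$ hold only off a null set, and that the hypotheses of L\'evy's lemma are genuinely met---namely adaptedness of $\{N_i = 0\}$ and divergence of the conditional probabilities on the relevant event.
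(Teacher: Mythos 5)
Your proof is correct, and up through the martingale step it coincides with the paper's: both identify $\{\hat M_i\}$ (equivalently your $\{\hat N_i\}$) as a nonnegative martingale, invoke the martingale convergence theorem, and use the EWMA recursion to force $M_i - \hat M_{i-1} \to 0$, hence convergence of $M_i$ itself to the same limit. Where you genuinely diverge is in the crux step of ruling out a limit strictly greater than $1$. The paper argues that the limit $X$ must be degenerate and then appeals to the Poisson variance--mean identity, asserting $\varn{M_i} \to \varn{X} = X - 1 = 0$; this captures the right idea but is stated loosely, since almost sure convergence does not by itself yield convergence of variances, and the passage from ``$M_i - \exs{M_i} \to 0$'' to degeneracy of the limit is left implicit. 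You instead exploit integrality of the support---the limit must be a nonnegative integer and the sequence eventually constant off a null set---and then eliminate each candidate value $c \ge 1$ via L\'evy's conditional Borel--Cantelli lemma, using $\IP{N_i = 0 \mid \mathcal F_{i-1}} = e^{-\hat N_{i-1}} \to e^{-c} > 0$ to force infinitely many zeros, contradicting eventual constancy at $c$. Your route is the more airtight as written; what the paper's variance route buys, once made precise (e.g., through conditional variances), is an argument that transfers verbatim to any size distribution whose variance is a continuous function of the mean vanishing only at the boundary, which is how the corollary for geometric and negative binomial interarrival times is meant to follow. Your argument also extends to those cases, but requires checking separately that the conditional probability of the minimal value stays bounded away from zero along the relevant event.
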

\begin{proof}
First note that $\{\hat{M}_i\}$ is a positive martingale, since  $\exs{\hat{M}_i | M_{1:i-1}} = \exs{\hat{M}_i | \hat{M}_{i-1}} = \hat{M}_{i-1}$. 
By the martingale convergence theorem \cite[Sec 7.4]{shiryaev}, $\hat{M}_i \rightarrow X$ a.s. for some random variable $X$, and $X \in [1, \infty)$ naturally. 
By (\ref{eq:appx_ewma_2}), it is also clear that $M_i \rightarrow X$ a.s.

However, we must then note $M_i - \hat{M}_{i-1} = M_i - \exs{M_i} \rightarrow 0$.
In other words, $X$ is a degenerate random variable, taking a value in $[1, \infty)$ with probability 1.
However, $M_i$ is degenerate iff $\varn{M_i} = 0$. 
Noting that $M_i$ is defined as a shifted Poisson random variable, and $\varn{M_i} \rightarrow \varn{X} = X - 1 = 0$, we have the desired proof.
\end{proof}
Similarly, this proof can be extended to geometric and negative binomial random variables to show that
\begin{corollary}
$Q_i \longrightarrow 1$ as $i \uparrow \infty$ a.s.
\end{corollary}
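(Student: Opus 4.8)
The plan is to reproduce the proof of the Proposition essentially verbatim, replacing the shifted-Poisson size recursion by the corresponding interarrival recursion. Concretely, I would take
\begin{subequations}
\begin{align}
Q_i &\sim \mathcal{NB}(\hat{Q}_{i-1}, \nu_q), \\
\hat{Q}_{i} &= (1 - \beta) \hat{Q}_{i-1} + \beta Q_i,
\end{align}
\end{subequations}
with $0 < \beta \le 1$, and likewise the geometric variant $Q_i \sim \mathcal{G}(\hat{Q}_{i-1})$. Because the mean parameterization already absorbs the unit shift, we have $\exs{Q_i \mid \hat{Q}_{i-1}} = \hat{Q}_{i-1}$ by construction, so no auxiliary centering is needed.

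The first two steps are then identical to the Proposition. The EWMA recursion together with $\exs{Q_i \mid \hat{Q}_{i-1}} = \hat{Q}_{i-1}$ makes $\{\hat{Q}_i\}$ a positive martingale bounded below by $1$; the martingale convergence theorem then gives $\hat{Q}_i \to X$ a.s.\ for some $X \in [1, \infty)$, and the recursion forces $Q_i \to X$ a.s.\ as well. Next, from $Q_i - \hat{Q}_{i-1} = Q_i - \exs{Q_i \mid \hat{Q}_{i-1}} \to 0$ one concludes that $X$ is degenerate, hence $\varn{Q_i} \to 0$.

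The only genuinely new ingredient is the variance-mean relation used in the final step. For the shifted geometric of mean $\mu$ one has $\varn{Q_i} = \mu(\mu-1)$, and for the shifted negative binomial $\mathcal{NB}(\mu, \nu_q)$ the appendix gives $\varn{Q_i} = \nu_q(\mu - 1)$; in both cases the variance is a continuous function of the mean vanishing precisely at $\mu = 1$. Passing to the limit $\hat{Q}_{i-1} \to X$ therefore turns $\varn{Q_i} \to 0$ into $X(X-1) = 0$ (geometric) or $\nu_q(X-1) = 0$ (negative binomial), and the constraint $X \ge 1$ forces $X = 1$, \ie $Q_i \to 1$ a.s.

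I do not expect a serious obstacle, since the martingale machinery transfers with no change. The one point to check carefully is that each variance expression vanishes only at mean $1$ and not at some other admissible value, which is immediate from the factor $(\mu-1)$ and the support constraint $\mu \ge 1$. A minor bookkeeping remark is that the dispersion $\nu_q$ must be held fixed along the recursion (as in the EWMA NB specification of Table~\ref{tab:dt_models}), so that the variance-to-shifted-mean ratio stays constant and the limiting argument remains clean.
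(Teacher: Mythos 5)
Your proposal is correct and follows exactly the route the paper intends: the paper proves the corollary only by remarking that the Proposition's martingale argument ``can be extended'' to the geometric and negative binomial cases, and your write-up supplies precisely that extension. The one genuinely new ingredient you identify---that the shifted-geometric variance $\mu(\mu-1)$ and the shifted-negative-binomial variance $\nu_q(\mu-1)$ each vanish only at $\mu=1$ on the admissible range $\mu\ge 1$---is indeed the only point requiring a check, and you handle it correctly.
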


{\em Static} models do not suffer from the convergence problem. 
Moreover, \cite{snyder_forecasting_2012} discuss a set of models with stationary mean processes that mitigate this issue. 
This is done by defining both sizes and intervals as a stationary autoregressive process.
For example,
\[
\hat{M}_{i} = (1 - \varphi - \beta) \mu + \beta \hat{M}_{i-1} + \varphi M_i,
\] 
where $\varphi + \beta < 1$, $\varphi, \beta, \mu \in \IR_{+}$, and $Q_i$ analogously.

\paragraph*{S4 Appendix: Forecast accuracy metrics.}
Our first two forecast accuracy metrics are based on forecast distributions, while the rest are some of the most commonly used metrics for measuring point forecast accuracy.
\begin{itemize}
\item {\bf P50} and {\bf P90 Loss} are metrics based on the quantile loss. 
Letting $\hat{y}(\rho)$ denote the quantile estimate---here obtained via sampling from the forecast distribution---we denote
\[
\text{QL}_\rho(\obs_n, \widehat{\obs}_n(\rho)) 
=\begin{cases} 
2\cdot\rho\cdot(\obs_n - \widehat{\obs}_n(\rho)), & \obs_n - \widehat{\obs}_n(\rho) > 0, \\
2\cdot(1-\rho)\cdot(\widehat{\obs}_n(\rho) - \obs_n), & \obs_n - \widehat{\obs}_n(\rho) \leqslant 0.
\end{cases}
\]
We define {\bf PXXLoss}, given lead time $L < N$,
\[
\text{PXXLoss}_\rho (\mathbf{y}, \mathbf{\hat{y}}) 
= \dfrac{1}{LM} \sum_{i=1}^M \sum_{n=1}^L \text{QL}_\rho\left( y_{in}, \hat{y}_{in} \right)
\]
We report {\bf P50Loss} and {\bf P90Loss}, setting $\rho = 0.5$ and $\rho = 0.9$ respectively.
\item {\bf MAPE} is symmetric mean absolute percentage error. As this metric is undefined when $y_{in} = 0$, we discard these instances and consider only time steps when $y_{in} > 0$.
\[
\text{MAPE}(\mathbf{y}, \mathbf{\hat{y}})  
= \dfrac{1}{M} \sum_{i=1}^M 
\left(\dfrac{1}{\sum_n [y_{in} > 0]} \sum_{n=1}^L 
[y_{in} > 0] \dfrac{|y_{in} - \hat{y}_{in}|}{|y_{in}|} \right).
\]
\item {\bf sMAPE} is symmetric mean absolute percentage error. 
\[
\text{sMAPE}(\mathbf{y}, \mathbf{\hat{y}})  = \dfrac{2}{LM} \sum_{i=1}^M \sum_{n=1}^L 
\dfrac{|y_{in} - \hat{y}_{in}|}{|y_{in}| + |\hat{y}_{in}|}.
\]
\item {\bf RMSE} is root mean squared error.
\[
\text{RMSE}(\mathbf{y}, \mathbf{\hat{y}}) = \left(\dfrac{1}{LM} \sum_{i=1}^M \sum_{n=1}^L (y_{in} - \hat{y}_{in})^2\right)^{\frac{1}{2}}.
\]
\item Finally, {\bf RMSSE} is root mean squared scaled error, also used in the M5 forecasting competition \cite{m5guide}.
\[
\text{RMSSE}(\mathbf{y}, \mathbf{\hat{y}}) = \dfrac{1}{ML} \sum_{i=1}^M
\sum_{n=1}^L\dfrac{(y_{in} - \hat{y}_{in})^2}{\frac{1}{L'} \sum_{n'=1}^{L'-1} |y_{i,n'+1} - y_{in'}|}.
\]
Here, we let $n'$ index the in-sample time series (before the forecast horizon), that has length $L'$.
\end{itemize}

\section*{Acknowledgments}
The authors would like to thank Nikolaos Kourentzes for kindly providing the Auto and RAF data sets.

\bibliography{main}

\end{document}